%% file: acl_latex.tex
\pdfoutput=1

\PassOptionsToPackage{table,xcdraw}{xcolor}
\PassOptionsToPackage{backref=page}{hyperref}
\documentclass[11pt]{article}

\usepackage[final]{acl}

\usepackage{times}
\usepackage{latexsym}

\usepackage[T1]{fontenc}

\usepackage[utf8]{inputenc}

\usepackage{microtype}
\newcommand\blfootnote[1]{
  \begingroup
  \renewcommand\thefootnote{}\footnote{#1}
  \addtocounter{footnote}{-1}
  \endgroup
}

\usepackage{inconsolata}

\usepackage{graphicx}

\usepackage{url}
\usepackage{booktabs}
\usepackage{amsmath,amsfonts,amssymb,amsthm}

\usepackage{xspace}
\usepackage[english]{babel}
\usepackage{xurl}
\usepackage{algorithm,algorithmic}

\usepackage{arydshln}
\usepackage{multirow}

\usepackage{tcolorbox}

\usepackage{pifont}

\tcbuselibrary{skins}
\tcbuselibrary{breakable}

\definecolor{my_green}{RGB}{0, 176, 80}
\definecolor{my_red}{RGB}{255, 0, 0}

\newcommand{\cmark}{\textcolor{my_green}{\ding{52}}}
\newcommand{\xmark}{\textcolor{my_red}{\ding{56}}}

\newtheorem{theorem}{Theorem}

\newtheorem{assumption}[theorem]{Assumption}

\newcommand{\method}{TSRL\xspace}

\title{\raisebox{-0.6ex}{\includegraphics[width=0.05\textwidth]{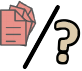}} \method: Trigger-based Supervised Reinforcement Learning for \\ Continual Agent Training}

\author{
 \textbf{Yuqian Fu\textsuperscript{1,2},}
 \textbf{Kaiwen Jiang\textsuperscript{1,2},}
 \textbf{Zijun Song\textsuperscript{1,2},}\\
 \textbf{Jiawei Wang\textsuperscript{3},}
 \textbf{Yuanheng Zhu\textsuperscript{1,2,\dag},}
 \textbf{Dongbin Zhao\textsuperscript{1,2}}
\\
 \textsuperscript{1}SKL-MAIS, Institute of Automation, Chinese Academy of Sciences,\\
 \textsuperscript{2}School of Artificial Intelligence, University of Chinese Academy of Sciences,\\
 \textsuperscript{3}University of Science and Technology of China
\\
 \normalsize{
   {\texttt{\{fuyuqian2022,yuanheng.zhu\}@ia.ac.cn}}
 }
}

\begin{document}
\maketitle

\blfootnote{$^\dagger$The corresponding authors.}

\begin{abstract}
Large language model (LLM) agents excel at complex, multi-step tasks but suffer from catastrophic forgetting during continual learning—acquiring new skills often degrades previously mastered capabilities. Existing post-training paradigms face a fundamental trade-off: Supervised Fine-Tuning (SFT) is sample-efficient but prone to severe forgetting, while Reinforcement Learning (RL) preserves prior capabilities but converges slowly without expert guidance. To bridge this gap, we introduce \textit{\textbf{T}rigger-based \textbf{S}upervised \textbf{R}einforcement \textbf{L}earning} (\method), a framework that integrates the strengths of both paradigms through an adaptive, state-dependent expert supervision mechanism. \method employs RL as the optimization backbone, activating SFT selectively on "hard states" where the policy fails. Theoretically, we show that restricting supervision to failure states derives an upper bound on the cumulative parameter updates, thereby reducing forgetting compared to full SFT. Empirically, \method achieves consistent gains across three agent benchmarks (ALFWorld, WebShop, and Deep Search) at both 7B and 32B scales. In continual learning scenarios, our method retains up to 9.6 points more capability on prior tasks than SFT$\to$RL baselines, while matching or exceeding target-task performance.
\end{abstract}

\section{Introduction}
\label{sec:intro}

Large language model (LLM) agents have demonstrated strong performance on complex, multi-step tasks across diverse domains~\citep{zhang2025landscape,xi2025survey,dong2025survey}.
However, deploying these agents in real-world environments presents a challenge: practical applications are inherently dynamic, requiring agents to continuously acquire new skills while retaining learned competencies~\citep{bell2025future, guo2025comprehensive}.
This \emph{continual learning} paradigm suffers from \emph{catastrophic forgetting}, a phenomenon where learning new tasks degrades performance on previously mastered ones~\citep{mccloskey1989catastrophic}.

\begin{figure}[t]
    \centering
    \includegraphics[width=\linewidth]{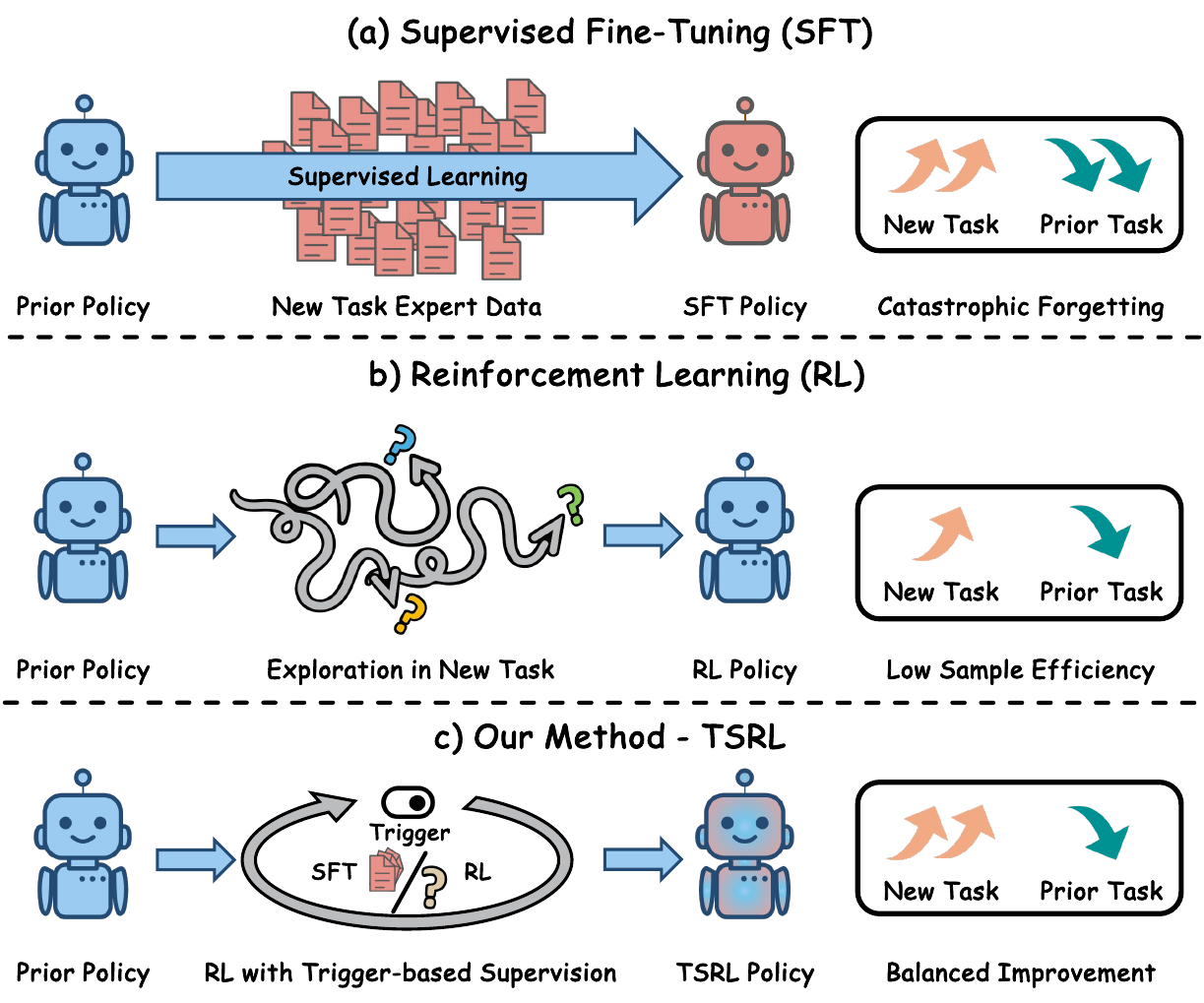}
    \vspace{-1.5em}
    \caption{\textbf{Overview} of different post-training methods for continual agent learning.}
    \label{fig:motivation}
    \vspace{-1.5em}
\end{figure}

\begin{figure*}[ht]
    \centering
    \includegraphics[width=1.0\textwidth]{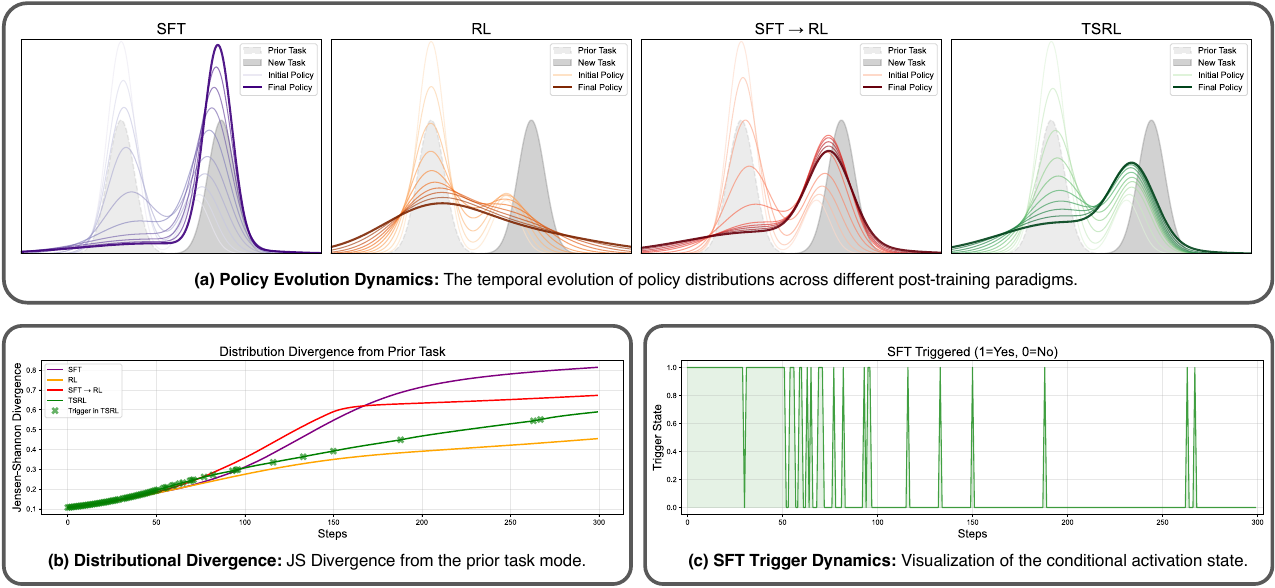}
    \vspace{-1.2em}
    \caption{\textbf{Illustrative case using bimodal Gaussian mixtures.} \method achieves a favorable trade-off between forgetting mitigation and effective utilization of expert data by leveraging trigger-based supervision.}
    \label{fig:case}
    \vspace{-1.5em}
\end{figure*}

Existing post-training approaches primarily employ two paradigms: Supervised Fine-Tuning (SFT) and Reinforcement Learning (RL)~\citep{tie2025large}.
Under continual learning, these paradigms show different behaviors: RL induces less forgetting than SFT while achieving comparable target-task performance~\citep{lai2025reinforcement,shenfeld2025rl}.
This performance discrepancy arises from their distinct optimization objectives.
SFT minimizes forward KL divergence, inducing \emph{mode-covering} behavior that forces the policy to match the entire support of the expert distribution.
In contrast, RL optimizes reverse KL divergence, promoting \emph{mode-seeking} behavior that concentrates on high-reward regions without overwriting other behavioral modes~\citep{korbak2022reinforcement,chen2025retaining}.
However, relying solely on RL ignores expert demonstrations, and is typically less sample-efficient: SFT provides dense, token-level supervision, whereas RL relies on sparse, response-level rewards, making credit assignment harder~\citep{fu2026revisiting,zhang2026rollouts}.
This motivates a critical question: \emph{How can we mitigate forgetting while effectively utilizing expert data?}

To address this question, we conduct an illustrative case study in Section~\ref{sec:example}, which reveals three key insights: RL exhibits minimal forgetting due to its mode-seeking nature; SFT induces severe forgetting through mode-covering behavior; and trigger-based supervision achieves a favorable trade-off.

We further formalize these observations theoretically. Under standard local approximations, we derive an upper bound on forgetting in terms of cumulative Fisher-weighted parameter updates---a quantity larger under full SFT than under selective supervision.
These findings motivate two design principles: \ding{182}~\emph{Sparsity via Locality}---supervision should be triggered only on hard states where the policy fails; and \ding{183}~\emph{On-policy Foundation}---the primary optimization should remain on-policy RL.

Guided by these principles, we propose \emph{Trigger-based Supervised Reinforcement Learning}~(\textbf{\method}), a principled framework that combines SFT and RL for continual agent training, illustrated in Figure~\ref{fig:motivation}.
The key insight behind \method is that expert demonstrations should be leveraged \emph{adaptively}---invoked only when the current policy struggles, rather than uniformly applied across all training instances.
Concretely, \method maintains RL as its optimization backbone and introduces a supervision trigger mechanism based on the Group Success Rate (GSR).
When the GSR falls below a predefined threshold---indicating that the policy lacks capability on a particular task---our method leverages the corresponding expert demonstration and injects a supervised gradient during training.
As training progresses and the policy improves, the trigger naturally deactivates for an increasing proportion of tasks, ensuring that supervision remains \emph{sparse} and \emph{localized} to genuinely difficult instances.
Additionally, we employ a temporal annealing schedule that gradually reduces the influence of expert supervision, allowing the agent to transcend the expert's performance limitations via RL exploration.

We evaluate \method on three agent benchmarks: ALFWorld and WebShop (with 4B and 7B models), and Deep Search (with 32B models).
Experimental results demonstrate that \method consistently outperforms baselines and exhibits resistance to forgetting in continual learning scenarios.

Overall, our \textbf{key contributions} are threefold:
\begin{itemize}
    \setlength{\itemsep}{0pt}
    \item We present an \textbf{illustrative case study} and \textbf{theoretical analysis} of the roles that SFT and RL play in continual learning, deriving two design guidelines: supervision should be sparse and localized to hard states, and the optimization backbone should remain on-policy RL.
    \item We propose \textbf{\method}, a framework that combines the sample efficiency of SFT with the forgetting-resistant properties of RL through a \textbf{trigger-based supervision} mechanism, enabling effective capability acquisition under the continual agent learning settings.
    \item We conduct extensive experiments demonstrating the effectiveness of \method across diverse agent benchmarks and model scales, including systematic evaluation of forgetting behavior in \textbf{continual learning} scenarios.
\end{itemize}

\section{Preliminaries}
\label{sec:preliminaries}

\paragraph{RL for LLM Agents.}
We formulate reinforcement learning for the LLM agent as a finite-horizon Markov Decision Process (MDP) $\mathcal{M} = (\mathcal{S}, \mathcal{A}, P, r, H)$~\citep{bellman1957markovian,puterman2014markov}.
Here, $\mathcal{S}$ denotes the state space, $\mathcal{A}$ the action space, $P: \mathcal{S} \times \mathcal{A} \to \Delta(\mathcal{S})$ the transition kernel (where $\Delta(\cdot)$ denotes the probability simplex), $r: \mathcal{S} \times \mathcal{A} \to \mathbb{R}$ the reward function, and $H$ the horizon (maximum number of interaction turns).
A policy $\pi: \mathcal{S} \to \Delta(\mathcal{A})$ maps states to distributions over actions and induces a trajectory distribution $\mathbb{P}_\pi$.
The RL objective is to maximize the expected cumulative reward: $J(\pi) = \mathbb{E}_{(s_t, a_t) \sim \mathbb{P}_\pi} [ \sum_{t=1}^{H} r(s_t, a_t) ]$.

In LLM agents, the state $s_t \in \mathcal{S}$ aggregates the interaction history up to turn $t$, including the initial prompt, previous actions (e.g., tool calls, reasoning traces), and environmental feedback (e.g., code outputs, search results). The action $a_t \in \mathcal{A}$ represents the agent's textual output, and the transition $P(s_{t+1} \mid s_t, a_t)$ captures environmental responses such as tool execution outcomes.

\paragraph{Problem Setting.}
In this work, we distinguish three policies: a pretrained \emph{prior policy} $\pi_0$ with parameters $\theta_0$, encoding prior capabilities; a \emph{task expert} $\pi_E$ optimized for the target task, accessible via offline demonstrations; and a \emph{training policy} $\pi_{t}$ in the new task at iteration $t$.

To quantify the degree of forgetting, let $\mathcal{D}_{\mathrm{prior}}$ denote a distribution on which we wish to preserve performance. We define $P_{\mathrm{prior}}(\pi)$ as the policy's performance (e.g., success rate or accuracy) on the prior task $\mathcal{D}_{\mathrm{prior}}$. The performance forgetting metric is defined as $P_{\mathrm{forget}} = P_{\mathrm{prior}}(\pi_{t}) - P_{\mathrm{prior}}(\pi_0)$.
During post-training, the aim is to achieve high new-task performance while minimizing the prior-task performance forgetting.

\paragraph{Baseline Post-Training Paradigms.}
In LLM post-training, three paradigms exhibit a fundamental trade-off. \textbf{Supervised Fine-Tuning (SFT)} minimizes $\mathbb{E}_{(s, a_E) \sim \mathcal{D}_E} [ -\log \pi(a_E \mid s) ]$ on expert data $\mathcal{D}_E \sim \mathbb{P}_{\pi_E}$. While sample-efficient, it risks overwriting capabilities in $\pi_0$. \textbf{Reinforcement Learning (RL)} directly optimizes $J(\pi)$ from $\pi_0$, preserving proximity to the prior policy but typically requiring substantially more interaction data because the learning signal is sparse and delayed.
A widely adopted LLM RL algorithm is Group Relative Policy Optimization (GRPO)~\citep{shao2024deepseekmath}, which eliminates the need for a value network. Specifically, for a given state $s$, GRPO samples a group of $G$ responses $\{a^{(i)}\}_{i=1}^G$ from the policy $\pi_{t}$, computes rewards $\{r^{(i)}\}_{i=1}^G$, and normalizes them to obtain group-relative advantages:
\begin{equation}
    \label{eq:group-relative-advantage}
        \hat{A}^{(i)} = \frac{r^{(i)} - \mathrm{mean}(\{r^{(j)}\}_{j=1}^G)}{\mathrm{std}(\{r^{(j)}\}_{j=1}^G)}.
\end{equation}
The policy is then updated via a clipped objective analogous to PPO~\citep{schulman2017proximal}, with an optional KL regularization term to prevent excessive deviation from a reference policy.
\textbf{SFT followed by RL} is a popular two-stage approach that first applies SFT and then continues with RL. Although effective for target-task performance, the initial SFT phase may cause forgetting that the subsequent RL phase cannot fully recover.

\section{Illustrative Case Study}
\label{sec:example}

To build intuition for how different post-training paradigms affect catastrophic forgetting, we conduct a controlled experiment using bimodal Gaussian mixture distributions, inspired by~\citet{chen2025retaining}.
As shown in Figure~\ref{fig:case}, we model the LLM agent's policy and target distributions for prior and new tasks as Gaussian mixtures.
This setup simulates a realistic, continual learning scenario where the new task partially overlaps with prior knowledge but requires acquiring novel behavioral patterns.
We compare four training paradigms over 300 optimization steps: pure SFT, pure RL, SFT$\to$RL (SFT for 150 steps, followed by RL for 150 steps), and our method \method.

The results reveal three key insights.
\ding{182}~\emph{Pure RL exhibits minimal forgetting} on the prior task (Figures~\ref{fig:case}(a) and (b)), as its mode-seeking nature concentrates probability mass on high-reward regions without perturbing other modes.
However, pure RL suffers from slow convergence due to the absence of expert guidance.
\ding{183}~\emph{SFT induces severe forgetting}: its mode-covering objective forces the policy to redistribute probability mass uniformly, overwriting parameters encoding prior competencies.
\ding{184}~\emph{\method achieves a favorable trade-off} between new-task acquisition and prior-task retention.
Figure~\ref{fig:case}(c) shows that the trigger rate decreases progressively as training proceeds, confirming that \method leverages expert data primarily in early learning when guidance is most beneficial, while relying on RL in later stages to mitigate forgetting.
In contrast, SFT$\to$RL applies dense supervision throughout the first 150 steps, incurring larger forgetting before transitioning to RL.
While the RL phase can partially recover lost capabilities, the initial damage from aggressive SFT updates is difficult to reverse fully.
\method's trigger mechanism ensures sparse supervision from the outset, preserving parameter subspaces critical for prior task performance.
Detailed experimental settings and analysis are provided in Appendix~\ref{sec:case_detail}.

\paragraph{Theoretical Justification.}
The empirical observations above can be formalized from a parameter-space perspective. We characterize forgetting via the Fisher Information Matrix (FIM) $F_{\text{prior}}$, which captures the sensitivity of the policy to parameter changes on the prior task distribution. Under a local quadratic approximation in Assumption~\ref{ass:quad}, forgetting in training is characterized by the Fisher-weighted displacement:
\begin{equation}
\Delta_{\text{forget}} \approx \frac{1}{2}\bigl\|{\textstyle\sum_t} \Delta\theta_t\bigr\|_{F_{\text{prior}}}^2.
\end{equation}
This expression can be further bounded in Theorem~\ref{thm:forgetting}. The resulting bound implies that forgetting is controlled by the cumulative magnitude of updates along directions sensitive to the prior task. In contrast, trigger-based supervision in \method restricts updates to hard states where the policy fails, yielding strictly smaller cumulative energy and thus reduced forgetting in continual learning. Full definitions and derivations appear in Appendix~\ref{sec:theory}.

\section{Methodology}
\label{sec:method}
The illustrative case study and theoretical analysis in Section~\ref{sec:example} establish that restricting expert supervision to sparse, hard states yields less forgetting than full SFT. We now present \textbf{T}rigger-based \textbf{S}upervised \textbf{R}einforcement \textbf{L}earning (\textbf{\method}), a framework that operationalizes these insights.

Our framework consists of three key components: \ding{182}~a group-based RL backbone that continuously optimizes the policy on the target task, \ding{183}~a failure detection mechanism that identifies hard states via group success rate statistics, and \ding{184}~an adaptive supervision module that injects expert supervision only when triggered. We describe each component in detail below.

\subsection{Trigger-based Supervision on Hard State}
\label{sec:failure}

To determine when expert supervision should be activated, we leverage the inherent exploration mechanism of group-based RL for failure detection.

For a given state $s$, the policy $\pi_\theta$ samples a group of $G$ actions $\{a_1, a_2, \dots, a_G\}$. Let $r(a_i|s) \in \{0, 1\}$ denote the binary reward for the $i$-th output. We define the \emph{Group Success Rate} (GSR) as:
\begin{equation}
    \label{eq:group-success-rate}
\mathrm{GSR}(s) = \frac{1}{G} \sum_{i=1}^G r(a_i|s).
\end{equation}
A state $s$ is classified as a \emph{hard state} if the policy fails to generate sufficient correct responses within the sampled group. Formally, the \emph{trigger} condition is defined by the indicator function:
\begin{equation}
    \mathbb{I}_{\mathrm{trigger}}(s) = \mathbb{I}\left( \mathrm{GSR}(s) \leq \delta \right),
    \label{eq:trigger}
\end{equation}
where $\delta \in [0, 1]$ is a predefined threshold. For instance, $\delta=0$ implies supervision is triggered when \emph{all} generations in the group fail, while $\delta=0.5$ activates supervision when fewer than half succeed.

This formulation offers two advantages. First, it dynamically adapts to the policy's evolving capability: as the agent learns and $\mathrm{GSR}(s)$ improves, the trigger automatically deactivates for an increasing proportion of states. Second, by conditioning on group-level statistics rather than individual samples, the detection mechanism exhibits robustness to stochastic variations in generation quality.

\subsection{Policy Optimization}
\label{sec:update}

At each iteration $t$, the optimization objective combines the RL objective with the SFT loss. We describe each component and its integration below.

\paragraph{RL Backbone.}
We employ group-based RL (e.g., GRPO~\citep{shao2024deepseekmath}, DAPO~\citep{yu2025dapo}) to estimate the policy gradient.
Specifically, following GRPO, for each output $a_i$ in the group, we compute the advantage $A_i$ by normalizing rewards within the group, as shown in Section~\ref{sec:preliminaries}. Following PPO~\citep{schulman2017proximal}, GRPO applies a clipped surrogate objective to stabilize training. The corresponding gradient is $g_t^{\mathrm{RL}} = \nabla_\theta \mathcal{L}_t^{\mathrm{RL}}$.

\paragraph{Adaptive Expert Supervision.}
When the trigger condition in Eq.~\eqref{eq:trigger} is satisfied ($\mathbb{I}_{\mathrm{trigger}}(s) = 1$), the algorithm queries the expert $\pi_E$ or retrieves the corresponding demonstration from an offline expert buffer to obtain a target action $a_E$. The SFT loss is computed only on on-policy states sampled from the current rollout buffer $\mathcal{B}_t$:
\begin{equation}
    \mathcal{L}_t^{\mathrm{SFT}} = \mathbb{E}_{s \sim \mathcal{B}_t} \left[
    \mathbb{I}_{\mathrm{trigger}}(s) \cdot \left( -\log \pi_{\theta_t}(a_E | s) \right) \right].
    \label{eq:sft_loss}
\end{equation}
The indicator function ensures that supervision remains sparse and localized to difficult instances, thereby minimizing the parameter drift associated with full SFT. Although the target action $a_E$ is off-policy, the trigger itself is evaluated on states encountered by the current policy, preserving the on-policy foundation of the overall training loop.

\paragraph{Combined Update with Annealing.}
The parameters are updated by combining both objectives:
\begin{equation}
    \theta_{t+1} = \theta_t - \eta \left( g_t^{\mathrm{RL}} + \lambda(t) \, g_t^{\mathrm{SFT}} \right),
    \label{eq:update}
\end{equation}
where $\eta$ is the learning rate, $g_t^{\mathrm{SFT}} = \nabla_\theta \mathcal{L}_t^{\mathrm{SFT}}$, and $\lambda(t)$ is a temporal annealing coefficient that linearly decays from $\lambda_0$ to $0$ over training.

The annealing schedule serves a critical purpose: in later training stages, the policy $\pi_{\theta}$ may develop specialized reasoning patterns that diverge from the expert $\pi_{E}$ yet yield high rewards. Enforcing strict alignment with the expert at this stage can introduce objective mismatch, hindering the policy's self-refinement. By annealing $\lambda(t) \to 0$, we allow the agent to transcend the expert's limitations via pure on-policy exploration once a sufficient competency foundation is established.

\subsection{Summary and Design Principles}
\label{sec:summary}

The complete training procedure is provided in Algorithm~\ref{alg:main}. We summarize the advantages of \method compared to existing paradigms in Table~\ref{tab:comparison}. Our algorithm directly instantiates two core design principles derived from the previous analysis:

\begin{itemize}
    \setlength{\itemsep}{0pt}
    \item \textbf{Sparsity via Locality.} By computing the SFT gradient strictly contingent on the trigger condition (Eq.~\eqref{eq:trigger}), we ensure that supervision is localized only to hard states.
    \item \textbf{On-policy Foundation.} The algorithm maintains the RL gradient as the constant driver of optimization. This ensures that policy updates remain grounded in the model's own generation probability landscape, preserving behavioral modes learned during pretraining.
\end{itemize}

\begin{table}[ht]
\centering
\caption{\textbf{Comparison of post-training paradigms.} \cmark~indicates the property is satisfied, and \xmark~indicates it is not. \method simultaneously achieves both high sample efficiency and forgetting resistance.}
\label{tab:comparison}
\small
\renewcommand{\arraystretch}{1}
\resizebox{\linewidth}{!}{
\begin{tabular}{>{}lcccc<{}}
\toprule
\textbf{Method} &\textbf{Sample Efficiency} & \textbf{Forgetting Resistance} \\
\midrule
Pure SFT       & \cmark & \xmark \\
Pure RL        & \xmark & \cmark\\
SFT$\to$RL     & \cmark & \xmark \\\midrule
\method (ours)  & \cmark & \cmark\\
\bottomrule
\end{tabular}}
\vspace{-1em}
\end{table}

\section{Experiments}
\label{sec:experiments}

We present comprehensive experiments to demonstrate the effectiveness of \method. Across three diverse benchmarks, our approach consistently outperforms baseline methods and exhibits resilience to forgetting in continual agent learning.

\subsection{Experimental Settings}

\paragraph{Datasets.}
Our evaluation spans three representative agent benchmarks targeting distinct task domains. \textbf{ALFWorld}~\citep{shridhar2021alfworld} focuses on embodied decision-making in household environments; \textbf{WebShop}~\citep{yao2022webshop} addresses interactive web navigation; and \textbf{Deep Search} involves complex multi-step information retrieval and synthesis, encompassing BrowseComp~\citep{wei2025browsecomp} and BrowseComp-ZH~\citep{zhou2025browsecomp}.
The reward structures differ across benchmarks: WebShop offers continuous rewards in the range $[0, 1]$, whereas ALFWorld and Deep Search provide sparse binary signals indicating task completion. All benchmarks are formulated as finite-horizon sequential decision problems. To ensure high-quality supervision, we retain only trajectories achieving a reward of 1 for constructing the training set. Detailed descriptions of the datasets are provided in Appendix~\ref{sec:data}. Appendix~\ref{sec:additional_experiments} further extends the continual-learning sequence with ScienceWorld~\citep{wang-etal-2022-scienceworld}.

\input{table/7b_result.tex}

\input{table/continual_7b.tex}

\paragraph{Training Setup.}
We instantiate LLM agents using Qwen2.5-7B-Instruct~\citep{yang2024qwen25} for ALFWorld and WebShop, and Qwen2.5-32B-Instruct for the more demanding Deep Search task. For all group-based RL algorithms, we set the rollout group size $G = 8$. The batch size is configured as $B = 64$ for 7B experiments and $B = 256$ for 32B experiments. The supervision trigger threshold $\delta$ is set to 0, indicating that expert demonstrations are incorporated only when all sampled outputs within a group fail to produce correct solutions. Experiments on ALFWorld and WebShop are executed on 8 NVIDIA H20 GPUs, while Deep Search experiments utilize 128 NVIDIA H20 GPUs. Complete hyperparameter configurations are documented in Appendix~\ref{sec:training_details}.

\paragraph{Baselines.}
We benchmark \method against several competitive approaches spanning two categories. First, we evaluate \textbf{closed-source LLMs} including GPT-4o~\citep{achiam2023gpt}, Claude-3.5-Sonnet~\citep{anthropic2024Claude35}, and Gemini-2.5-Pro~\citep{team2023gemini}, which serve as upper-bound references for powerful general-purpose agent capabilities. Second, we compare against standard \textbf{post-training paradigms}: pure SFT, pure RL, and the sequential combination SFT$\to$RL. Within the RL category, we consider two widely adopted algorithms: GRPO~\citep{shao2024deepseekmath} and DAPO~\citep{yu2025dapo}. Our proposed \method framework is instantiated as \method~$_{\text{GRPO}}$ and \method~$_{\text{DAPO}}$, applying the trigger-based supervision mechanism at each respective RL backbone. This design enables controlled comparisons of forgetting dynamics across different training paradigms. To further validate our approach, we additionally compare against more baselines  in Appendix~\ref{sec:additional_experiments}.

\subsection{Main Results}

Our comprehensive experiments demonstrate that \method yields performance improvements across a diverse range of tasks, baselines, and model scales, while exhibiting forgetting resistance.

\input{table/32b_result.tex}

\begin{figure*}[th]
    \centering
    \includegraphics[width=0.85\linewidth]{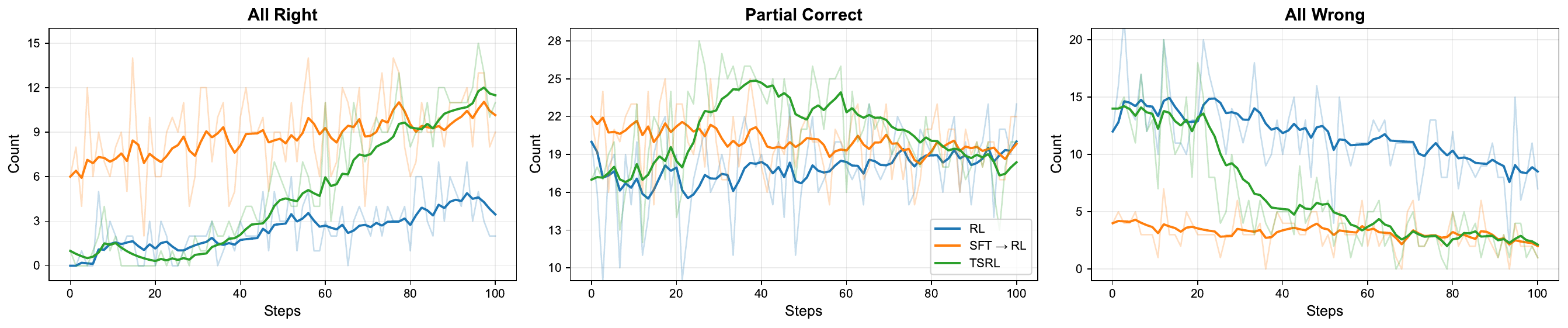}
    \vspace{-0.5em}
    \caption{\textbf{Dynamics of the success/failure ratio} within each training batch across optimization steps.}
    \label{fig:curves_all_groups_row}
    \vspace{-1em}
\end{figure*}

\paragraph{Performance on ALFWorld and WebShop.}
As shown in Table~\ref{tab:7b_result}, when training on their respective tasks, \method achieves consistent performance gains over all baselines on both benchmarks. Specifically, \method$_{\text{GRPO}}$ improves upon the SFT$\to$GRPO baseline by 4.3\% on ALFWorld and 3.5\% on WebShop (success rate), while \method$_{\text{DAPO}}$ yields improvements of 3.4\% and 4.4\%, respectively. These results confirm that trigger-based supervision effectively harnesses expert demonstrations while preserving the benefits of on-policy exploration.

\paragraph{Continual Learning on ALFWorld and WebShop.}
We evaluate the two-task continual learning setting where a model trained on one task is further adapted to the other. As shown in Table~\ref{tab:continual_7b}, in the WebShop direction (training on WebShop after ALFWorld), \method emerges as the \emph{sole} method that acquires strong new-task capability while preserving the prior task. In the reverse direction, pure RL methods preserve the prior task better than SFT-based baselines, but \method still offers the strongest overall balance between retention and final target-task performance. Appendix~\ref{sec:additional_experiments} shows that this trend also holds in a longer three-task sequence.

\paragraph{Performance on Deep Search.}
To further validate the effectiveness of our approach at larger model scales, we conduct experiments on the Deep Search task using Qwen2.5-32B-Instruct with GRPO. Since mathematical reasoning is among the model's core general capabilities, we use it as a proxy to measure forgetting. Specifically, we evaluate on the mathematical competition benchmarks AIME 24 and AIME 25~\citep{numina_math_datasets}, which require sophisticated multi-step reasoning.

As shown in Table~\ref{tab:32b_result}, our method achieves performance comparable to the SFT$\to$RL baseline on Deep Search while exhibiting significant advantages in preserving math reasoning capabilities. Notably, on AIME 25, the agent trained with \method even surpasses the base model, suggesting that our approach maintains its forgetting mitigation benefits at larger model scales. We also note that training dynamics may differ between 7B and 32B models, consistent with the observations in~\citet{tan2025scaling} and \citet{shenfeld2025rl}.
Appendix~\ref{sec:additional_experiments} further includes comparisons with representative hybrid baselines and experiments on Qwen3-4B, both of which support the same conclusion.

\subsection{Analysis}

To comprehensively evaluate \method, we conduct extensive analyses focusing on three critical aspects: the training dynamics across different methods, ablation studies on the trigger threshold, and the effect of the annealing mechanism.

\paragraph{Dynamics of the Success/Failure Ratio.}
To understand how different training paradigms affect policy competence throughout optimization, we track the success/failure ratio within each batch under the Deep Search setting. The results are presented in Figure~\ref{fig:curves_all_groups_row}.
Our findings indicate that RL methods suffer from relatively low sample efficiency. SFT provides dense token-level supervision, whereas RL only receives response-level rewards, making credit assignment harder. In contrast, \method, through its trigger-based SFT mechanism that selectively incorporates expert demonstrations, is able to converge more rapidly and surpass the performance of the SFT$\to$RL baseline.

\begin{figure}[t]
    \centering
    \includegraphics[width=\linewidth]{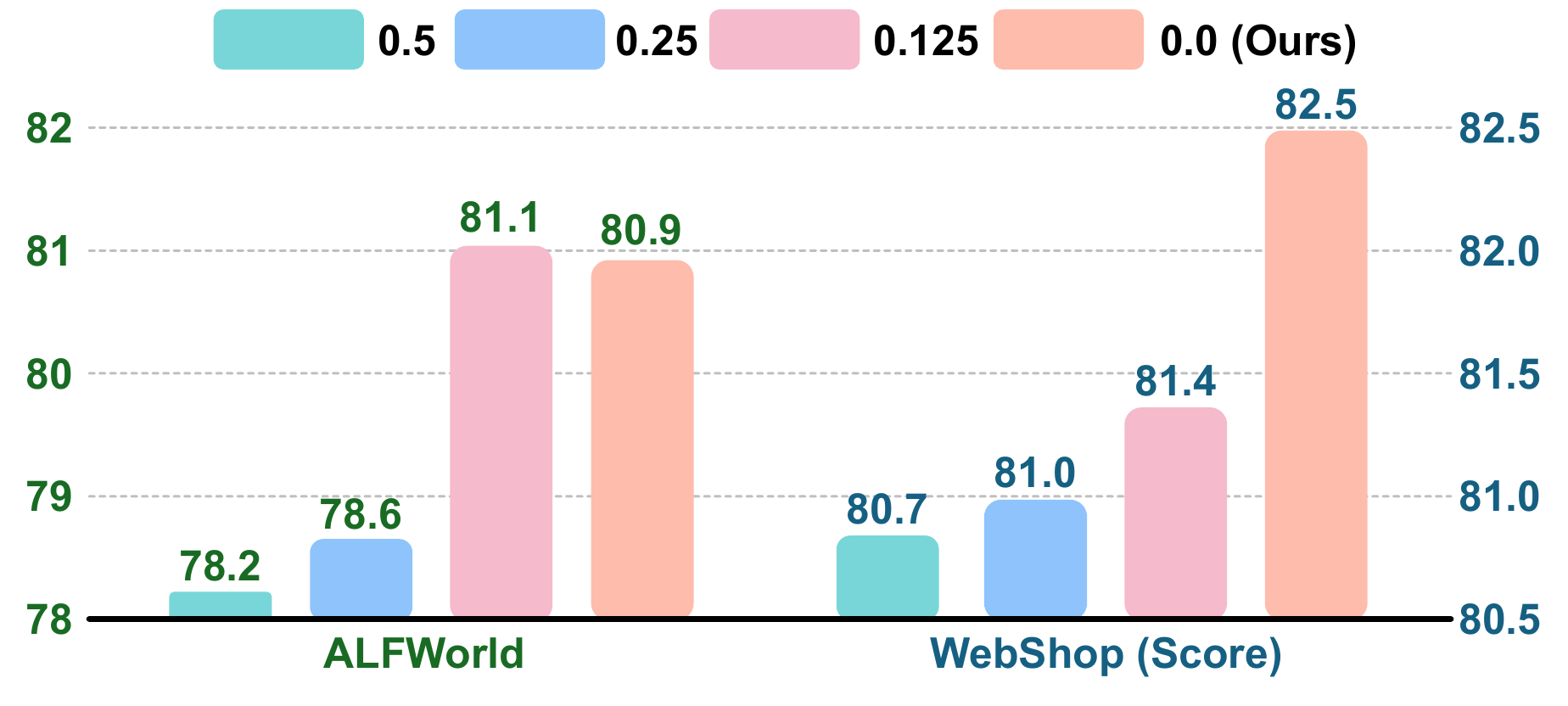}
    \vspace{-1.5em}
    \caption{\textbf{Performance comparison across different trigger threshold $\delta$} on ALFWorld and WebShop.}
    \vspace{-1.3em}
    \label{fig:threshold}
\end{figure}

\paragraph{Ablation Studies on Trigger Threshold.}
To evaluate the effect of the trigger threshold (GSR) on \method performance, we experiment with different threshold values $\delta$ on the ALFWorld and WebShop benchmarks. The results are shown in Figure~\ref{fig:threshold}. Our findings reveal that triggering SFT only when the entire group fails (i.e., $\delta = 0$) enables the most effective utilization of expert data. Higher threshold values introduce more frequent SFT updates, which negatively impact performance. This observation provides empirical support for our theoretical insight that applying SFT uniformly across all expert data during training is suboptimal and can impair subsequent RL performance.

\paragraph{Ablation Studies on Annealing Mechanism.}
To reduce interference between SFT and RL exploration in later training stages, \method incorporates a simple linear annealing schedule for the SFT loss weight. To assess the sensitivity of performance to annealing configurations, as shown in Table~\ref{tab:anneal-abla}, we compare the following settings: \ding{182}~Fixed weights of 1.0 and 0.1---both configurations result in noticeable performance degradation, confirming that persistent SFT supervision adversely affects final performance; \ding{183}~Annealing from 1 to 0 over 25 and 50 steps---the 50-step schedule achieves performance comparable to the 100-step configuration used in \method, indicating that performance is reasonably robust to the choice of annealing duration.
\section{Related Work}
\label{sec:related_work}

\paragraph{Reinforcement Learning for LLM Agents.}
Reinforcement learning has emerged as a strong paradigm for training LLM agents~\citep{ouyang2022training, guo2025deepseek, chai2025empowering}. Recent work mainly improves credit assignment, training stability, and uncertainty handling in agent RL~\citep{dong2025agentic, feng2025retool, wang2025harnessing, xu2025epo, wang2025ragen, zhang2025rlvmr}. Most of these methods focus on improving RL itself, whereas our goal is different: how to incorporate expert data into continual training without erasing prior capabilities. We defer the fuller survey to Appendix~\ref{app:related_work}.

\paragraph{Catastrophic Forgetting in Continual Training.}
A primary challenge in continual training is catastrophic forgetting, wherein acquiring new skills degrades previously learned capabilities~\citep{mccloskey1989catastrophic, kirkpatrick2017overcoming}. Recent evidence shows that RL forgets less than SFT because updates remain closer to the base policy and focus on high-reward modes instead of densely matching the expert distribution~\citep{lai2025reinforcement, shenfeld2025rl, chen2025retaining}. Hybrid methods such as UFT and LUFFY mix supervised and reinforcement signals~\citep{fu2026srft, liu2025uft, yan2025learning}, but they mainly target single-task reasoning rather than continual agent training. By contrast, \method uses sparse, outcome-based supervision specifically to control forgetting. Appendix~\ref{app:related_work} provides a more detailed comparison.

\input{table/anneal-ablation.tex}

\section{Conclusion}
In this paper, we introduce \method, a trigger-based supervised reinforcement learning framework for continual agent training that effectively combines the sample efficiency of SFT with the forgetting-resistant properties of RL. By maintaining RL as the optimization backbone and activating expert supervision only on hard states where the policy fails, \method adaptively leverages expert demonstrations without overwriting previously learned capabilities. Unlike existing post-training paradigms that apply supervision uniformly or sequentially, our trigger mechanism ensures that expert guidance remains sparse and localized to difficult tasks. Extensive experiments across three agent benchmarks demonstrate that \method consistently outperforms baseline methods while retaining more capability on prior tasks in continual learning scenarios.
This work represents a step toward enabling continual learning for LLM agents in real-world applications.

\clearpage

\section*{Limitations}
\label{sec:limitations}

The effectiveness of \method is closely tied to the quality of expert demonstrations and the design of the trigger-based supervision mechanism. In our experiments, we curate high-quality trajectories by retaining only those achieving maximum reward; however, in real-world scenarios where such filtering is impractical or expert policies are suboptimal, the supervision signal may introduce noise. Moreover, the current binary trigger condition based on group success rate is well-suited for tasks with sparse binary rewards, but may not fully capture the nuanced difficulty of states in tasks with dense or continuous reward structures.

Future work could explore robust aggregation strategies that leverage multiple expert sources of varying quality or develop mechanisms to automatically assess demonstration reliability. Extending the trigger mechanism to incorporate reward magnitude or uncertainty estimates would enable more fine-grained control over supervision activation. Another promising direction involves developing adaptive annealing strategies based on policy entropy or gradient statistics, which can dynamically adjust the transition from guided agent learning to autonomous exploration across different target tasks and training stages.

\section*{Acknowledgements}
This work was supported in part by Beijing Nova Program under Grant 20240484514 and in part by National Natural Science Foundation of China under Grants 62136008 and 62293541.

\bibliography{references}

\clearpage
\appendix

\renewcommand{\thefigure}{A\arabic{figure}}
\renewcommand{\thetable}{A\arabic{table}}
\renewcommand{\theHfigure}{A\arabic{figure}}
\renewcommand{\theHtable}{A\arabic{table}}

\setcounter{figure}{0}
\setcounter{table}{0}

\renewcommand{\theequation}{A\arabic{equation}}
\renewcommand{\theHequation}{A\arabic{equation}}
\setcounter{equation}{0}

\section*{Appendix}
\addcontentsline{toc}{section}{Appendix}

\section{Pseudocode of \method}

We describe the training process of \method framework in Algorithm~\ref{alg:main}.

\begin{algorithm}[h]
\caption{Training Process for \method}
\label{alg:main}
\baselineskip=1.1\baselineskip
\begin{algorithmic}
\STATE \textbf{Input:} Base policy $\pi_{0}$, expert policy $\pi_E$, training dataset $\mathcal{D}$, group size $G$, trigger threshold $\delta$, total training steps $T$
\STATE \textbf{Output:} Trained policy $\pi_{T}$
\STATE Initialize $\theta \leftarrow \theta_0$
\FOR{$t = 1, 2, \ldots, T$}
    \STATE Sample a batch of states $\{s_i\}_{i=1}^{B}$ from $\mathcal{D}$
    \STATE $\mathcal{L}_{\text{RL}} \leftarrow 0$, $\mathcal{L}_{\text{SFT}} \leftarrow 0$
    \FOR{each state $s$ in the batch}
        \STATE \textcolor[HTML]{0671b9}{\textbf{\texttt{// Group Success Rate Compute}}}
        \STATE Sample $G$ actions $\{a^{(j)}\}_{j=1}^{G} \sim \pi_{t}(\cdot \mid s)$
        \STATE Compute rewards $r$ for each action
        \STATE Compute group success rate via Eq.~\eqref{eq:group-success-rate}
        \STATE \textcolor[HTML]{0671b9}{\textbf{\texttt{// RL Gradient Compute}}}
        \STATE Compute the group relative advantages $\{\hat{A}^{(j)}\}_{j=1}^{G}$ via Eq.~\eqref{eq:group-relative-advantage}
        \STATE Accumulate RL loss via GRPO or DAPO
        \STATE \textcolor[HTML]{0671b9}{\textbf{\texttt{// Trigger-based SFT}}}
        \IF{$\text{GSR}(s) \leq \delta$}
        \STATE Query expert for actions $a_E \sim \pi_E(\cdot \mid s)$ or query offline expert demonstrations
        \STATE Accumulate SFT loss via Eq.~\eqref{eq:sft_loss}
        \ENDIF
    \ENDFOR
    \STATE \textcolor[HTML]{0671b9}{\textbf{\texttt{// Update Policy}}}
    \STATE Update the parameters of $\pi_{t}$ via Eq.~\eqref{eq:update}
\ENDFOR
\RETURN $\pi_{T}$
\end{algorithmic}
\end{algorithm}

\newpage
\section{Details of Illustrative Case}
\label{sec:case_detail}

This section provides detailed experimental settings and extended analysis for the illustrative case presented in Section~\ref{sec:example}.

\subsection{Experimental Setup}

We simulate the continual learning scenario using bimodal Gaussian mixture distributions, which capture the multi-modal structure commonly observed in LLM policy distributions.

\paragraph{Distribution Configuration.}
The prior-task distribution is represented by a mixture of two Gaussians:
\begin{equation}
    p_{\mathrm{prior}}(x) = 0.5 \cdot \mathcal{N}(x; \mu_1, \sigma^2) + 0.5 \cdot \mathcal{N}(x; \mu_2, \sigma^2),
\end{equation}
where $\mu_1$ and $\mu_2$ denote the two mode centers, and $\sigma^2$ is the shared variance.
The new-task distribution shares one mode with the prior task but introduces a novel mode:
\begin{equation}
    p_{\mathrm{new}}(x) = 0.5 \cdot \mathcal{N}(x; \mu_1, \sigma^2) + 0.5 \cdot \mathcal{N}(x; \mu_3, \sigma^2),
\end{equation}
where $\mu_3 \neq \mu_2$ represents the new mode location of the new tasks.
This configuration simulates a realistic scenario in which the new task partially overlaps with the prior task but requires acquiring novel behavioral patterns.

\paragraph{Training Paradigms.}
We compare four training paradigms over 300 optimization steps:
\begin{itemize}    \setlength{\itemsep}{0pt}
    \item \textbf{Pure SFT}: Supervised learning on expert demonstrations from the new task throughout all 300 steps.
    \item \textbf{Pure RL}: Policy gradient optimization without expert guidance for all 300 steps.
    \item \textbf{SFT$\to$RL}: SFT for steps 1--150, then RL for steps 151--300.
    \item \textbf{\method}: RL backbone with trigger-based SFT activated when the group success rate falls below threshold $\delta$.
\end{itemize}

\paragraph{Evaluation Metrics.}
We evaluate each paradigm using below two metrics:
\begin{itemize}   \setlength{\itemsep}{0pt}
    \item \textbf{Prior Task Retention}: The negative KL divergence between the learned policy and the prior task distribution, measuring how well the policy preserves prior capabilities.
    \item \textbf{New Task Acquisition}: The negative KL divergence between the learned policy and the new task distribution, measuring how effectively the policy learns the new task.
\end{itemize}

\subsection{Extended Analysis}

\paragraph{Mode-Seeking vs. Mode-Covering.}
The fundamental difference between RL and SFT lies in their optimization objectives.
SFT minimizes the forward KL divergence $D_{\mathrm{KL}}(\pi_E \| \pi_\theta)$, which encourages \emph{mode-covering} behavior: the policy must assign non-negligible probability mass wherever the expert distribution has support.
This forces the policy to spread out to cover all expert modes, inevitably overwriting the probability mass allocated to prior task modes.
In contrast, RL effectively minimizes the reverse KL divergence $D_{\mathrm{KL}}(\pi_\theta \| \pi_{\mathrm{reward}})$, promoting \emph{mode-seeking} behavior: the policy concentrates probability mass on high-reward regions without being forced to cover all modes.
This allows the policy to acquire new capabilities while leaving the modes corresponding to prior tasks relatively undisturbed.

\paragraph{Trigger Frequency Dynamics.}
As shown in Figure~\ref{fig:case}(c), the supervision trigger frequency in \method exhibits a characteristic decay pattern.
In early training, the policy frequently fails on novel queries, triggering expert supervision to provide guidance.
As the policy improves, it succeeds on an increasing proportion of queries, causing the trigger to deactivate.
This adaptive supervision behavior ensures that:
\begin{itemize}
    \setlength{\itemsep}{0pt}
    \item Expert knowledge is utilized when most beneficial (early training, hard instances).
    \item The policy transitions to pure RL exploration as competency develops.
    \item Cumulative supervised updates remain sparse, minimizing forgetting.
\end{itemize}

\paragraph{Comparison with SFT$\to$RL.}
While SFT$\to$RL also combines both paradigms, its sequential structure leads to suboptimal dynamics.
The dense SFT phase (steps 1--150) causes substantial parameter drift away from the base model, overwriting parameters critical for prior task performance.
Although the subsequent RL phase can partially recover some capabilities by re-exploring the policy space, the initial damage proves difficult to fully reverse.
In contrast, \method interleaves sparse SFT with continuous RL from the beginning.
The trigger mechanism ensures that even in early training, supervision is applied only to genuinely difficult instances where the policy cannot find correct solutions through exploration.
This preserves the parameter subspaces encoding prior capabilities while providing guidance for new task acquisition.

\paragraph{Theoretical Connection.}
These empirical observations directly support the theoretical analysis in Appendix~\ref{sec:theory}.
The Fisher information framework establishes that forgetting is bounded by the cumulative Fisher-weighted energy of parameter updates.
By restricting supervision to hard states (where the policy has a near-zero success rate), \method ensures that the cumulative supervision energy ($C_{\mathrm{tri.}}$) remains small compared to full SFT ($C_{\mathrm{ful.}}$).
The trigger mechanism thus provides a practical instantiation of the theoretical principle that sparse supervision minimizes destructive parameter drift.

\section{Theoretical Analysis}
\label{sec:theory}

The dichotomy between SFT and pure RL motivates a central question: \emph{How can we leverage expert knowledge to improve sample efficiency without catastrophically overwriting the prior capabilities?}
We posit that not all expert interventions are necessary; the utility of supervision is state-dependent.
This section develops a theoretical framework demonstrating that by restricting expert supervision exclusively to states where the policy fails, we can bound the parameter drift relative to SFT, thereby minimizing forgetting.

\subsection{Fisher Information Characterization of Forgetting}
\label{sec:fisher}

To quantify forgetting, we must measure how much the policy's behavior changes on the \textit{prior} distribution $\mathcal{D}_{\mathrm{prior}}$, even though training occurs on the \textit{new} task.
Directly measuring function space divergence is intractable.
Therefore, we resort to a local quadratic approximation, where the \emph{Fisher Information Matrix} (FIM)~\citep{kirkpatrick2017overcoming} serves as the Riemannian metric tensor, linking parameter drift to functional degradation.

Let $\theta_0$ denote the parameters of the base policy $\pi_0$. Define the prior preserve loss $L_{\mathrm{prior}}(\theta) \triangleq \mathbb{E}_{(s,a) \sim \mathcal{D}_{\mathrm{prior}}}[-\log \pi_\theta(a|s)]$.

\begin{assumption}[Local quadratic structure]
\label{ass:quad}
There exists a positive semidefinite matrix $F_{\mathrm{prior}}$, the Fisher information matrix of $\pi_0$ under $\mathcal{D}_{\mathrm{prior}}$, such that for parameters $\theta$ in a neighborhood of $\theta_0$:
\begin{equation}
    L_{\mathrm{prior}}(\theta) \approx L_{\mathrm{prior}}(\theta_0) + \frac{1}{2} (\theta - \theta_0)^\top F_{\mathrm{prior}} (\theta - \theta_0).
\end{equation}
\textit{Remark.} $\nabla L_{\mathrm{prior}}(\theta_0) = 0$ is satisfied when the base model $\pi_0$ has been pretrained to convergence on data drawn from $\mathcal{D}_{\mathrm{prior}}$. This is a standard assumption for foundation models in continual learning, which typically undergo extensive pretraining before task-specific fine-tuning.
We do not assume that the global LLM loss landscape is quadratic; the approximation is only used locally around the pretrained prior policy, which is the relevant regime for continual post-training initialized from a converged model.
\end{assumption}

Under Assumption~\ref{ass:quad}, the forgetting metric $\Delta_{\mathrm{forget}} \triangleq L_{\mathrm{prior}}(\pi_T) - L_{\mathrm{prior}}(\pi_0)$ admits the representation:
\begin{equation}
    \Delta_{\mathrm{forget}} \approx \frac{1}{2} \|\theta_T - \theta_0\|_{F_{\mathrm{prior}}}^2,
    \label{eq:forget_fisher}
\end{equation}
where $\|v\|_{F_{\mathrm{prior}}}^2 \triangleq v^\top F_{\mathrm{prior}} v = \|F_{\mathrm{prior}}^{1/2} v\|_2^2$ denotes the squared Fisher-weighted norm.

This formulation implies that to minimize forgetting, the optimization trajectory should avoid directions with high curvature on the prior manifold (i.e., directions where $F_{\mathrm{prior}}$ is large).

\subsection{Forgetting Reduction via Selective Supervision}
\label{sec:restricted}

We now analyze the impact of the post-training paradigm on parameter drift.
Although the gradient updates are computed based on the \textit{target} task, their accumulation results in a displacement vector $\theta_T - \theta_0$. We evaluate the cost of this displacement using the geometry of the prior task ($F_{\mathrm{prior}}$).

Let $\Delta\theta_t^{\mathrm{ful.}}$ denote parameter updates derived from standard SFT (full supervision) at iteration $t$, and $\Delta\theta_t^{\mathrm{tri.}}$ denote updates from trigger-based SFT (selective supervision).
We characterize their cumulative impacts via their Fisher-weighted energy.

\begin{assumption}[Cumulative update bounds]
\label{ass:sft-mags}
There exist nonnegative constants $C_{\mathrm{ful.}}$ and $C_{\mathrm{tri.}}$ such that the cumulative Fisher-weighted energy of the updates satisfies:
\begin{equation}
\begin{aligned}
    \textstyle\sum_t \|F_{\mathrm{prior}}^{1/2} \Delta\theta_t^{\mathrm{ful.}}\|^2 &= C_{\mathrm{ful.}}, \\
    \textstyle\sum_t \|F_{\mathrm{prior}}^{1/2} \Delta\theta_t^{\mathrm{tri.}}\|^2 &= C_{\mathrm{tri.}}.
\end{aligned}
\end{equation}
\end{assumption}

Crucially, $C_{\mathrm{tri.}}$ aggregates gradients only from the subset of states where the policy fails, whereas $C_{\mathrm{ful.}}$ aggregates gradients from the entire state distribution.
Consequently, assuming the policy is not failing universally, the sparsity of the trigger mechanism implies $C_{\mathrm{tri.}} \ll C_{\mathrm{ful.}}$.

\begin{assumption}[RL update bound]
\label{ass:rl-mags}
There exists a nonnegative constant $C_{\mathrm{RL}}$ representing the cumulative energy of the on-policy RL updates:
\begin{equation}
    \textstyle\sum_t \|F_{\mathrm{prior}}^{1/2} \Delta\theta_t^{\mathrm{RL}}\|^2 \le C_{\mathrm{RL}}.
\end{equation}
\end{assumption}

We formally compare two paradigms: \ding{182}~SFT followed by RL ($\theta_T^{\mathrm{fol.}}$), and \ding{183}~Trigger-based SFT interleaved with RL ($\theta_T^{\mathrm{int.}}$).

\begin{theorem}[Bound on Forgetting]
\label{thm:forgetting}
Under Assumptions~\ref{ass:quad}--\ref{ass:rl-mags}, forgetting for both SFT$\to$RL and trigger-based SFT$\to$RL is bounded by:
\begin{equation}
\begin{aligned}
    \Delta_{\mathrm{forget}}^{\mathrm{fol.}} &\lesssim \frac{1}{2} T\bigl( \sqrt{C_{\mathrm{RL}}} + \sqrt{C_{\mathrm{ful.}}} \bigr)^2, \\
    \Delta_{\mathrm{forget}}^{\mathrm{int.}} &\lesssim \frac{1}{2} T\bigl( \sqrt{C_{\mathrm{RL}}} + \sqrt{C_{\mathrm{tri.}}} \bigr)^2.
\end{aligned}
\end{equation}
\end{theorem}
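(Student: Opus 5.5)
The plan is to start from the Fisher representation \eqref{eq:forget_fisher}, which follows from Assumption~\ref{ass:quad}, and bound the Fisher-weighted displacement by splitting it into its RL and SFT parts. For either paradigm, write the total displacement as
\begin{equation*}
\theta_T - \theta_0 = \textstyle\sum_{t=1}^{T} \bigl(\Delta\theta_t^{\mathrm{RL}} + \Delta\theta_t^{\mathrm{S}}\bigr).
\end{equation*}
Here $\Delta\theta_t^{\mathrm{S}}$ is $\Delta\theta_t^{\mathrm{ful.}}$ for SFT$\to$RL and $\Delta\theta_t^{\mathrm{tri.}}$ for the interleaved scheme.

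For SFT$\to$RL, the two phases occupy disjoint sets of steps. I will set $\Delta\theta_t^{\mathrm{RL}}=0$ on the SFT steps and $\Delta\theta_t^{\mathrm{ful.}}=0$ on the RL steps. This padding lets both schemes be treated with a single formula over $t=1,\dots,T$. The annealing weight $\lambda(t)\le\lambda_0$ can be absorbed into the constants of Assumption~\ref{ass:sft-mags}.

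\textbf{Key steps.} First, apply the triangle inequality for the seminorm $\|F_{\mathrm{prior}}^{1/2}\cdot\|$:
\begin{equation*}
\|\theta_T-\theta_0\|_{F_{\mathrm{prior}}} \le \bigl\|\textstyle\sum_t F_{\mathrm{prior}}^{1/2}\Delta\theta_t^{\mathrm{RL}}\bigr\| + \bigl\|\textstyle\sum_t F_{\mathrm{prior}}^{1/2}\Delta\theta_t^{\mathrm{S}}\bigr\|.
\end{equation*}
Second, bound each sum of $T$ vectors by Cauchy--Schwarz:
\begin{equation*}
\bigl\|\textstyle\sum_{t=1}^T v_t\bigr\| \le \sum_t \|v_t\| \le \sqrt{T}\,\bigl(\textstyle\sum_t \|v_t\|^2\bigr)^{1/2}.
\end{equation*}
Combined with Assumptions~\ref{ass:sft-mags} and~\ref{ass:rl-mags}, this gives $\sqrt{T C_{\mathrm{RL}}}$ for the RL part and $\sqrt{T C_{\mathrm{ful.}}}$ or $\sqrt{T C_{\mathrm{tri.}}}$ for the SFT part. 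Third, square the result and multiply by $\tfrac12$, using \eqref{eq:forget_fisher}. This yields
\begin{equation*}
\Delta_{\mathrm{forget}} \lesssim \tfrac12 T\bigl(\sqrt{C_{\mathrm{RL}}}+\sqrt{C_{\mathrm{S}}}\bigr)^2.
\end{equation*}
The symbol $\lesssim$ records that \eqref{eq:forget_fisher} is only a local approximation. The two displayed bounds follow by substituting $C_{\mathrm{ful.}}$ and $C_{\mathrm{tri.}}$ respectively.

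The RL constant $C_{\mathrm{RL}}$ should be the same in both bounds, since Assumption~\ref{ass:rl-mags} bounds the RL energy uniformly. Strictly, the RL updates depend on the trajectory, so $C_{\mathrm{RL}}$ should be read as a common upper bound valid for both runs. I will state this interpretation explicitly.

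\textbf{Main obstacle.} The analytic chain is routine. The delicate step is justifying that Assumption~\ref{ass:quad} still applies at $\theta_T$, i.e.\ that both trajectories remain in the neighborhood where the quadratic expansion holds. I would handle this by reading the statement as holding "to leading order in the drift," which is what $\lesssim$ expresses. If a sharper statement is wanted, I would add a remainder term $O(\|\theta_T-\theta_0\|^3)$, using a Lipschitz Hessian, and note that it is controlled by the same cumulative quantities.

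Finally, note that the bounds are ordered exactly when $C_{\mathrm{tri.}}\le C_{\mathrm{ful.}}$, as argued after Assumption~\ref{ass:sft-mags}. The theorem itself compares bounds, not the actual forgetting values.
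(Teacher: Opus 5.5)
Your proposal is correct and follows the paper's proof. Like the paper, you split $\theta_T-\theta_0$ into summed RL and supervised updates, apply the triangle inequality for the Fisher seminorm, and bound each $T$-term sum by Cauchy--Schwarz to get $\sqrt{T C_{\mathrm{RL}}}+\sqrt{T C_{\mathrm{S}}}$, then square and invoke \eqref{eq:forget_fisher}. Your zero-padding of the inactive phase spells out the SFT$\to$RL case, which the paper leaves implicit since it writes out only the interleaved case; your remarks on absorbing $\lambda(t)$, reading $C_{\mathrm{RL}}$ as a common bound, and the locality of the quadratic approximation are reasonable readings of what the $\lesssim$ conceals.
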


\begin{proof}
The total parameter displacement equals the vector sum of RL updates and supervised updates. For the interleaved case, $\theta_T^{\mathrm{int.}} - \theta_0 = \sum_t \Delta\theta_t^{\mathrm{RL}} + \sum_t \Delta\theta_t^{\mathrm{tri.}}$.
Using the definition of the Fisher norm $\|v\|_{F_{\mathrm{prior}}} = \|F_{\mathrm{prior}}^{1/2}v\|_2$ and the Triangle Inequality:
\begin{equation}
\begin{aligned}
    \|\theta_T^{\mathrm{int.}}& - \theta_0\|_{F_{\mathrm{prior}}} \\
    &\le \|\sum_t \Delta\theta_t^{\mathrm{RL}}\|_{F_{\mathrm{prior}}} + \|\sum_t \Delta\theta_t^{\mathrm{tri.}}\|_{F_{\mathrm{prior}}}  \\
    &\le \sum_t \|\Delta\theta_t^{\mathrm{RL}}\|_{F_{\mathrm{prior}}} + \sum_t \|\Delta\theta_t^{\mathrm{tri.}}\|_{F_{\mathrm{prior}}}  \\
    &\le \sqrt{T \cdot C_{\mathrm{RL}}} + \sqrt{T \cdot C_{\mathrm{tri.}}}.
\end{aligned}
\end{equation}
\end{proof}

Theorem~\ref{thm:forgetting} indicates that forgetting is driven by the magnitude of the supervised signal. Since the trigger mechanism restricts supervision to a subset of data (hard states), it inherently ensures $C_{\mathrm{tri.}} \ll C_{\mathrm{ful.}}$, thereby theoretically guaranteeing reduced forgetting compared to full SFT.

\subsection{Design Principles}
\label{sec:principles}

Supported by the analysis above, we adhere to two core principles:

\paragraph{\ding{182} Sparsity via Locality.} Supervision should not be uniform. By triggering expert intervention only on hard states, we minimize the total energy of supervised updates ($C_{\mathrm{tri.}}$), which is the dominant factor in controlling forgetting.

\paragraph{\ding{183} On-policy Foundation.} The base optimization must remain on-policy RL. This ensures that the primary learning signal respects the model's current generation probability, whereas offline SFT forces the model towards an external distribution, often incurring a larger penalty in the Fisher metric.

\newpage
\section{Extended Related Work}
\label{app:related_work}

This appendix provides a comprehensive survey of recent advances in reinforcement learning for LLM agents, complementing the discussion in Section~\ref{sec:related_work}.

\subsection{Reinforcement Learning for LLM Agents}
Building upon Proximal Policy Optimization (PPO)~\citep{schulman2017proximal}, Group Relative Policy Optimization (GRPO)~\citep{shao2024deepseekmath} eliminates the need for a separate value network by estimating advantages through group-based normalization, significantly reducing computational overhead and becoming a standard algorithm for RL with verifiable rewards (RLVR). However, extending GRPO to long-horizon agent tasks faces critical challenges in credit assignment and training stability. Recent work has proposed various algorithmic innovations to address these issues.

\paragraph{Credit Assignment.}
GiGPO~\citep{feng2025gigpo} introduces a two-level hierarchical structure that computes macro advantages at the episode level and micro advantages at the step level via anchor state grouping, achieving fine-grained credit assignment without additional rollouts. RLVMR~\citep{zhang2025rlvmr} moves beyond outcome-only supervision by providing dense, verifiable rewards for meta-reasoning behaviors such as planning, exploration, and reflection through programmatic rules.

\paragraph{Entropy Management.}
ARPO~\citep{dong2025agentic} observes that LLMs exhibit high entropy following tool interactions and proposes an entropy-based adaptive rollout mechanism that balances global trajectory sampling and step-level branching to align tool-use behaviors. EMPG~\citep{wang2025harnessing} identifies that policy gradient magnitudes are inherently coupled with entropy, leading to inefficient updates, and recalibrates learning signals based on step-wise uncertainty to amplify confident correct actions while attenuating noisy exploration. EPO~\citep{xu2025epo} addresses the exploration-exploitation cascade failure unique to multi-turn sparse-reward settings through entropy smoothing regularization and adaptive phase-based weighting.

\paragraph{Training Stability.}
ReTool~\citep{feng2025retool} enables dynamic interleaving of real-time code execution within reasoning processes and trains models through outcome-based RL to learn when and how to invoke computational tools. RAGEN~\citep{wang2025ragen} proposes StarPO, a general framework for trajectory-level agent RL that identifies the ``Echo Trap'' phenomenon where agents collapse into repetitive templates, and introduces stabilization techniques including trajectory filtering and gradient normalization.

\subsection{Catastrophic Forgetting in Agent Training Process}

\subsubsection{SFT and RL's Forgetting Mechanisms}

Recent studies have identified several mechanisms explaining why RL naturally mitigates catastrophic forgetting compared to SFT. First, RL updates are scaled by reward variance, acting as a data-dependent regularizer that protects sensitive parameter subspaces~\citep{lai2025reinforcement}. Second, on-policy RL remains biased toward KL-minimal solutions close to the base policy, whereas SFT may converge to distributions arbitrarily far from initialization~\citep{shenfeld2025rl}. Third, the mode-seeking nature of RL preserves existing behavioral modes while concentrating probability mass on high-reward regions; in contrast, SFT's mode-covering objective redistributes mass uniformly, overwriting prior competencies~\citep{chen2025retaining}. Finally, SFT updates tend to be redundant and high-magnitude, which risks overwriting parsimonious RL-acquired knowledge~\citep{yuan2025mitigating}.

Building on these insights, hybrid strategies have been proposed to further stabilize continual learning. MIFO~\citep{yuan2025mitigating} freezes RL-critical parameters during subsequent training phases, while RIF-RFT~\citep{lai2025reinforcement} filters rollout instances to enhance both efficiency and knowledge preservation. \citet{wang2025nemotron} propose Cascade RL for multi-domain training, requiring explicit removal of overlapping prompts between stages to prevent interference. Iterative SFT-style methods, motivated by the role of approximately on-policy data in mitigating forgetting~\citep{chen2025retaining}, alternate between online data collection and supervised updates to improve harder examples. Although their sequential setup differs from general continual learning, these design choices underscore that forgetting remains a practical concern even in RL-based pipelines.

Our proposed \method builds on these insights by introducing trigger-based supervision that activates SFT only on hard states, thereby combining the sample efficiency of expert demonstrations with the forgetting-resistant properties of on-policy RL.

\input{table/hybrid_related_compare.tex}

\newpage

\subsubsection{Hybrid SFT-RL Methods}

Recent methods combine supervised and reinforcement updates in several distinct ways. UFT~\citep{liu2025uft} optimizes a unified objective that applies dense supervision throughout training, targeting single-task reasoning performance and convergence speed. LUFFY~\citep{yan2025learning} mixes on-policy RL with off-policy trajectories through regularized importance sampling, emphasizing improved reasoning beyond the model's initial capability frontier. These methods are related to our work, but their design goals differ from continual agent training. In particular, they do not explicitly control the density of supervised updates using a state-dependent trigger derived from verified outcomes. Table~\ref{tab:hybrid_related} summarizes these differences.

\subsubsection{Comparison with Classical CL Methods}

Classical continual learning (CL) methods such as EWC~\citep{kirkpatrick2017overcoming}, LwF~\citep{li2017learning}, PackNet~\citep{mallya2018packnet}, and experience replay~\citep{rolnick2019experience} are designed for models with millions of parameters and assume access to prior task data or model snapshots. Scaling these methods to LLMs with billions of parameters poses significant challenges: EWC's Fisher matrix computation becomes intractable, LwF's distillation doubles memory requirements, and replay buffers grow linearly with task history.
In contrast, \method bridges this gap by combining RL's forgetting resistance with selective SFT, achieving both goals without auxiliary models, second-order statistics, or replay mechanisms.

\section{Additional Continual Learning Experiments}
\label{sec:additional_experiments}

The additional experiments in this appendix explore whether the same mechanism continues to matter if we move beyond that base setup. We therefore examine three complementary questions: whether TSRL still behaves well over a longer training chain with cumulative forgetting, whether its gains persist against stronger hybrid baselines that also mix RL and SFT, and whether the effect continues to hold on an additional model. Taken together, these results help separate the core benefit of trigger-based supervision from artifacts of a particular task pair or model choice.

\subsection{Longer Continual Learning Sequence with ScienceWorld}

To evaluate whether forgetting accumulates gracefully over longer training chains, we extend the continual-learning sequence from two stages to three stages: ALFWorld $\rightarrow$ WebShop $\rightarrow$ ScienceWorld. The model first acquires an ALFWorld prior, is then adapted to WebShop, and finally fine-tuned on ScienceWorld. We report performance on the final target task together with retention on both earlier tasks. The longer sequence confirms the same qualitative pattern as the two-task setting. Compared with the dense SFT$\to$RL baselines, \method$_{\text{GRPO}}$ improves retention on ALFWorld by 2.9 points and on WebShop by 2.6 points, while still slightly improving the final ScienceWorld result. The DAPO variant shows the same behavior, improving both prior-task metrics and the final ScienceWorld score over SFT$\to$DAPO. Pure RL remains the strongest retention-oriented baseline, but \method stays close to it while recovering the sample-efficiency benefits of selective expert guidance.

\input{table/continual_3task.tex}

\subsection{Additional Hybrid Baselines and Joint Training}

We further compare \method against representative hybrid baselines. \textbf{Iterative-SFT} alternates between online data collection and supervised updates, following the intuition that approximately on-policy supervised data can reduce forgetting~\citep{chen2025retaining}. \textbf{LUFFY} mixes on-policy RL with off-policy traces through regularized importance sampling~\citep{yan2025learning}. We also include \textbf{joint training} baselines that optimize a fixed weighted sum of RL and SFT losses, written as $\mathcal{L}_{\mathrm{RL}} + \alpha \mathcal{L}_{\mathrm{SFT}}$.

\input{table/hybrid_joint_baselines.tex}

Among the additional hybrid baselines, Iterative-SFT preserves prior-task performance reasonably well but is consistently weaker than TSRL on the new task, suggesting that alternating SFT updates alone is not sufficient to exploit expert knowledge efficiently in continual agent training. LUFFY achieves stronger target-task performance than GRPO, but still incurs more forgetting than TSRL, indicating that applying off-policy supervision to all samples remains too aggressive in a continual setting. The fixed-weight joint-training baselines clarify that the key benefit does not come from simply mixing two losses. Adding $0.1\mathcal{L}_{\mathrm{SFT}}$ or $0.5\mathcal{L}_{\mathrm{SFT}}$ to GRPO improves the WebShop target score relative to pure GRPO, but prior-task retention drops sharply and remains far below TSRL. The same pattern appears in the reverse direction, where joint training lifts ALFWorld acquisition but sacrifices substantially more WebShop retention than our trigger-based design.

\subsection{Performance on Qwen3-4B Model}

To test whether TSRL still maintains its continual learning capability on other models, we repeat the two-task continual-learning experiments using Qwen3-4B model~\citep{yang2025qwen3}.

\input{table/qwen3_4b.tex}

The results show that changing the base model does not eliminate forgetting. Relative to SFT$\to$GRPO, TSRL improves ALFWorld retention by 6.7 points in the WebShop direction and WebShop retention by 5.6 points in the reverse direction, while keeping target-task performance essentially unchanged. This indicates that the forgetting-mitigation effect of TSRL is not limited to the base model choice.

\section{Additional Discussions}
\label{sec:additional_discussions}

The empirical results show that TSRL works well in practice, but two conceptual points still deserve clarification. First, the method assumes that expert supervision can be accessed when repeated failures expose a capability gap, so we explain what ``querying the expert'' means in our experiments and in broader agent-training settings. Second, the trigger is defined through group success rate, which raises the question of how this signal relates to instance difficulty and policy stochasticity. The following discussion addresses these two issues and clarifies why TSRL uses an outcome-based trigger rather than an internal-confidence heuristic.

\subsection{On the Assumption of Querying the Expert}

TSRL is motivated by the practical setting where some form of expert supervision is available while catastrophic forgetting remains a concern. In our experiments, this supervision comes from offline buffers of successful trajectories, so ``querying the expert'' typically means retrieving the corresponding demonstration when the current policy repeatedly fails on an on-policy state. More broadly, the expert module is not restricted to a static offline teacher. In self-evolving settings, an agent can cache its own successful trajectories and later reuse them as supervision when the same capability degrades, effectively turning TSRL into a form of self-distillation. In online settings, the expert can instead be a stronger teacher model or a task-specific heuristic solver that is invoked when repeated failures reveal a capability gap. The assumption is sparse access to informative trajectories, rather than dense annotation for all states.

\subsection{Correlation Between GSR, Instance Difficulty, and Policy Stochasticity}

We view difficulty as an agent-relative concept rather than an absolute property of the instance. Policy stochasticity acts as a probe: if the current policy is noisy but fundamentally capable, sampling a group of trajectories will usually yield a non-zero GSR, which means the instance is still solvable through on-policy exploration. In contrast, when the GSR stays at zero (or below the trigger threshold) despite multiple samples, the failure is unlikely to be explained by stochastic variance alone. In that regime, the state is hard \emph{for the current policy}, and expert supervision becomes useful. This perspective also clarifies why GSR differs from confidence- or entropy-based signals. Internal uncertainty can be miscalibrated, especially in long-horizon agent tasks where models may be confidently wrong. GSR is instead outcome-based: it measures whether stochastic exploration actually discovers successful trajectories. As a result, it provides a more reliable trigger for deciding when to inject expert guidance.

\section{Experiment Details}

\subsection{Datasets}
\label{sec:data}
\paragraph{WebShop.}
WebShop~\citep{yao2022webshop} is a simulated e-commerce environment containing over 1.18 million real products and 12,087 crowd-sourced purchase instructions. Agents navigate the website through searching, browsing, and filtering to locate products that match user specifications. Upon selecting ``buy,'' the environment computes a continuous reward in $[0, 1]$ based on attribute-matching heuristics between the purchased item and the target product.

\paragraph{ALFWorld.}
ALFWorld~\citep{shridhar2021alfworld} provides interactive TextWorld environments paralleling the embodied ALFRED dataset~\citep{shridhar2020alfred}. Agents complete household tasks (e.g., ``Put a washed apple in the kitchen fridge'') through text-based interactions across six task types. The benchmark includes \textit{seen} and \textit{unseen} splits for evaluating in-distribution and out-of-distribution generalization, respectively. The environment returns sparse binary rewards indicating task completion.

\paragraph{Deep Search.}
Deep Search combines BrowseComp~\citep{wei2025browsecomp} and BrowseComp-ZH~\citep{zhou2025browsecomp}. BrowseComp comprises 1,266 challenging questions requiring persistent web browsing to locate hard-to-find, entangled information across numerous websites. BrowseComp-ZH extends this paradigm to the Chinese web with 289 multi-hop questions across 11 domains, natively constructed to address unique challenges such as fragmented platforms and implicit linguistic patterns. Both benchmarks provide sparse binary rewards based on exact answer matching.

\paragraph{ScienceWorld.}
ScienceWorld~\citep{wang-etal-2022-scienceworld} is a text-based interactive benchmark grounded in elementary-school science curricula. Agents must solve multi-step tasks spanning domains such as chemistry, thermodynamics, and electricity by interacting with a simulated environment. We use ScienceWorld as the third stage of a longer continual-learning sequence to test whether forgetting accumulates across multiple task transitions.

\paragraph{Math Reasoning.}
AIME 24 and AIME 25~\citep{numina_math_datasets} are derived from the American Invitational Mathematics Examination, a prestigious high school competition. Each exam features 15 challenging problems spanning algebra, number theory, combinatorics, and geometry, requiring multi-step reasoning with integer answers. We use these benchmarks to measure the capability retention of reasoning abilities after continual learning.

\subsection{Training Details}
\label{sec:training_details}
\paragraph{WebShop and ALFWorld.}
We implement our experiments using the Verl-Agent framework\footnote{\url{https://github.com/langfengQ/verl-agent}}~\citep{feng2025gigpo}, built upon the veRL codebase\footnote{\url{https://github.com/volcengine/verl}}~\citep{wan2025robust} (both under the Apache License 2.0). Verl-Agent offers a scalable infrastructure for multi-turn RL training of LLM agents, featuring flexible per-step input structures, customizable history management, and modular memory components. The framework supports diverse RL algorithms and agent environments, making it well-suited for our experiments. Table~\ref{tab:hyper_webshop} summarizes the key hyperparameters.
Unless otherwise stated, the additional 7B continual-learning experiments in Appendix~\ref{sec:additional_experiments} reuse the same training recipe, including the rollout group size, trigger threshold, and annealing schedule.

\paragraph{Deep Search.}
For the Deep Search, we equip the agent with two tools: Bing Search for web queries and a web viewer capable of reading page content and summarizing lengthy articles. The training configuration differs from the smaller-scale experiments to accommodate the increased complexity of multi-hop retrieval tasks, as provided in Table~\ref{tab:hyper_deepsearch}.

\subsection{Prompts}

We present the prompts used for WebShop, ALFWorld, and Deep Search in Figures~\ref{fig:webshop_prompt}, \ref{fig:alfworld_prompt}, and \ref{fig:deepsearch_prompt}, respectively.

\input{table/hyper_7b.tex}

\input{table/hyper_32b.tex}

\begin{figure}[ht]
\begin{tcolorbox}[
    colframe=black!80,
    colback=white,
    fonttitle=\bfseries\small,
    title=Instruction Prompt for WebShop,
    boxrule=0.5pt,
    arc=2pt,
    left=6pt,
    right=6pt,
    top=4pt,
    bottom=4pt
]
\small
You are an expert autonomous agent operating in the WebShop e-commerce environment.

\vspace{4pt}
Your task is to: {\{task\_description\}}.

\vspace{4pt}
Prior to this step, you have already taken {\{step\_count\}}
 step(s). Below are the most recent {\{history\_length\}} observations and the corresponding actions you took: {\{action\_history\}}.

\vspace{4pt}
You are now at step {\{current\_step\}} and your current observation is: {\{current\_observation\}}.

\vspace{4pt}
Your admissible actions of the current situation are: [{\{available\_actions\}}].

\vspace{4pt}
Now it's your turn to take one action for the current step.
You should first reason step-by-step about the current situation, then think carefully about which admissible action best advances the shopping goal. This reasoning process MUST be enclosed within <think> </think> tags.
Once you've finished your reasoning, you should choose an admissible action for current step and present it within <action> </action> tags.
\end{tcolorbox}
\caption{Prompt template for WebShop.}
\label{fig:webshop_prompt}
\end{figure}

\begin{figure}[ht]
\begin{tcolorbox}[
    colframe=black!80,
    colback=white,
    fonttitle=\bfseries\small,
    title=Instruction Prompt for ALFWorld,
    boxrule=0.5pt,
    arc=2pt,
    left=6pt,
    right=6pt,
    top=4pt,
    bottom=4pt
]
\small
You are an expert agent operating in the ALFRED Embodied Environment. Your task is to: {\{task\_description\}}

\vspace{4pt}
Prior to this step, you have already taken {\{step\_count\}} step(s). Below are the most recent {\{history\_length\}} observations and the corresponding actions you took: {\{action\_history\}}

\vspace{4pt}
You are now at step {\{current\_step\}} and your current observation is: {\{current\_observation\}}

\vspace{4pt}
Your admissible actions of the current situation are: [{\{admissible\_actions\}}].

\vspace{6pt}
Now it's your turn to take an action.

\vspace{4pt}
You should first reason step-by-step about the current situation. This reasoning process MUST be enclosed within [{<think> </think>}] tags.

\vspace{4pt}
Once you've finished your reasoning, you should choose an admissible action for current step and present it within {<action> </action>} tags.
\end{tcolorbox}
\caption{Prompt template for ALFWorld.}
\label{fig:alfworld_prompt}
\end{figure}

\begin{figure}[ht]
\begin{tcolorbox}[
    colframe=black!80,
    colback=white,
    fonttitle=\bfseries\small,
    title=Instruction Prompt for Deep Search,
    boxrule=0.5pt,
    arc=2pt,
    left=6pt,
    right=6pt,
    top=4pt,
    bottom=4pt
]
\small
Today is {\{current\_date\}}. You are a knowledgeable and information retrieval expert, very good at understanding and analyzing user questions, and providing rich and professional responses by combining your own knowledge and using the given tools.

\vspace{4pt}
You can call the following tools:[{\{"name": "search", "description": "Use a search engine to get information from the web.","parameters": [\{"name": "query", "description": "The search keyword for search engine", "type": "str"\}] , "required": ["query"]\}},
{\{"name": "open" , "description": "Open URL and return the web content related to info\_wanted", "parameters": [\{"name": "url" , "description": "The URL of the web page to open", "type": "str"\},\{"name": "info\_wanted", "description": "The description of the content to get", "type": "str"\}], "required": ["url", "info\_wanted"]\}}]

\vspace{4pt}
Please Note:

\vspace{4pt}
You must think first and put your thoughts in <think> </think>.

\vspace{4pt}
Please put the final answer in {<ans> </ans>} to ensure it is concise and direct.

\vspace{4pt}
Before finding the final answer, please try to find the answer in the current tool's capabilities as much as possible, do not give up.

\vspace{4pt}
The question is: {\{task\_description\}}.
\end{tcolorbox}
\caption{Prompt template for Deep Search.}
\label{fig:deepsearch_prompt}
\end{figure}

\end{document}

%% file: table/7b_result.tex
{
\addtolength{\extrarowheight}{\belowrulesep}

\aboverulesep=0pt
\belowrulesep=0pt
\begin{table*}[ht]
\centering
\caption{\textbf{Performance comparison on ALFWorld and WebShop benchmarks} (averaged over 3 random seeds). For ALFWorld, we report the average success rate (\%); for WebShop, we report both the average score and success rate (\%). TSRL consistently achieves performance improvements with both GRPO and DAPO backbones. The plus/minus signs (+/-) indicate the performance change compared to the corresponding SFT→RL baseline.}
\label{tab:7b_result}
\renewcommand{\arraystretch}{1}
\small

\begin{tabular}{>{}lccc<{}}
\toprule
\textbf{Method}                        & \textbf{ALFWorld}         & \textbf{WebShop (Score)}        & \textbf{WebShop (Succ.)}       \\ \midrule
\multicolumn{4}{l}{\color[HTML]{8B8280}\textit{\textbf{Base: Closed-Source Model}}}                                                             \\
\rowcolor[HTML]{EFEFEF}Prompting GPT-4o                                     & 48.0                      & 31.8                            & 23.7                           \\
Prompting  Gemini-2.5-Pro                            & 60.3                      & 42.5                            & 35.9                           \\ \midrule \midrule
\multicolumn{4}{l}{\color[HTML]{8B8280}\textit{\textbf{Base: Qwen2.5-7B-Instruct}}}                                       \\
\rowcolor[HTML]{EFEFEF}Prompting Qwen2.5                                    & 14.8                      & 26.4                            & 7.8                            \\
Prompting ReAct                                      & 31.2                      & 46.2                            & 19.5                           \\
\rowcolor[HTML]{EFEFEF}SFT                                                  & 59.3                      & 54.2                            & 28.6                           \\
GRPO                                                  & 74.4                      & 77.2                            & 64.3                           \\
\rowcolor[HTML]{EFEFEF}SFT$\to$GRPO                                & 76.6                      & 79.1                            & 65.8                           \\
DAPO                                                  & 86.8                      & 88.9                            & 73.3                           \\
\rowcolor[HTML]{EFEFEF}SFT$\to$DAPO                                & 87.0                      & 89.2                            & 74.5                           \\ \midrule
\rowcolor[HTML]{EBECFE}\textbf{\method$_\text{GRPO}$}                                                 & \textbf{80.9} \textcolor{gray}{(+4.3)}                   & \textbf{82.5} \textcolor{gray}{(+3.4)}                               & \textbf{69.3} \textcolor{gray}{(+3.5)}                              \\ \hdashline
\rowcolor[HTML]{EBECFE}\textbf{\method$_\text{DAPO}$}                                                 & \textbf{90.4} \textcolor{gray}{(+3.4)}                     & \textbf{90.7} \textcolor{gray}{(+1.5)}                               & \textbf{78.9} \textcolor{gray}{(+4.4)}                              \\ \bottomrule
\end{tabular}

\end{table*}}

%% file: table/continual_7b.tex
{
\addtolength{\extrarowheight}{\belowrulesep}
\aboverulesep=0pt
\belowrulesep=0pt

\begin{table*}[ht]
\centering

\caption{\textbf{Continual learning experiments between WebShop and ALFWorld.} This table reports performance on both \emph{prior} and new tasks when a model trained on one task is fine-tuned on another. \emph{Prior-task Model} denotes the LLM agent after GRPO training on the \emph{prior} task.}
\label{tab:continual_7b}
\renewcommand{\arraystretch}{1}

\resizebox{\textwidth}{!}{
\begin{tabular}{>{}lcccccc<{}}
\toprule

\multirow{2}{*}{{\textbf{Method}}}& \multicolumn{3}{c}{\textbf{Continual Learning in WebShop}}
& \multicolumn{3}{c}{\textbf{Continual Learning in ALFWorld}}
\\ \cmidrule(l){2-4} \cmidrule(l){5-7}
&\textit{\textbf{ALFWorld}} & \textbf{WebShop (Score)} & \textbf{WebShop (Succ.)} & \textbf{ALFWorld} & \textit{\textbf{WebShop (Score)}} & \textit{\textbf{WebShop (Succ.)}}
\\
\midrule

\rowcolor[HTML]{EFEFEF}\textbf{\textit{Prior-task Model}} & \textit{74.4} & 22.0 & 2.8
&  24.2 & \textit{77.2} & \textit{64.3} \\
SFT & \textit{69.3} & 49.1 & 18.3
 &  64.2 & \textit{58.9} & \textit{29.4} \\
\rowcolor[HTML]{EFEFEF}GRPO & \textit{74.2} & 86.7 & 70.0
 & 78.0 & \textit{79.3} & \textit{65.0} \\
 SFT$\to$GRPO & \textit{70.3} & 85.4 & 60.2
 & 84.3 & \textit{64.1} & \textit{58.7} \\
\rowcolor[HTML]{EFEFEF}DAPO & \textit{72.1} & 86.9 & 69.5
&  88.2 & \textit{79.1} & \textit{62.5} \\
SFT$\to$DAPO & \textit{73.4} & 87.2 & 70.4
&  90.0 & \textit{66.0} & \textit{63.1} \\ \midrule

\rowcolor[HTML]{EBECFE}
\textbf{\method$_\text{GRPO}$} & \textit{75.5 \textcolor{gray}{(+5.2)}} & \textbf{88.3} \textcolor{gray}{(+2.9)} & \textbf{74.5} \textcolor{gray}{(+14.3)}
& \textbf{85.2} \textcolor{gray}{(+0.9)} & \textit{81.6 \textcolor{gray}{(+17.5)}} & \textit{68.3 \textcolor{gray}{(+9.6)}} \\ \hdashline

\rowcolor[HTML]{EBECFE}
\textbf{\method$_\text{DAPO}$} & \textit{76.8 \textcolor{gray}{(+3.4)}} & \textbf{90.4} \textcolor{gray}{(+3.2)} & \textbf{75.2} \textcolor{gray}{(+4.8)}
 & \textbf{90.3} \textcolor{gray}{(+0.3)} & \textit{79.0 \textcolor{gray}{(+13.0)}} & \textit{64.6 \textcolor{gray}{(+1.5)}} \\

\bottomrule
\end{tabular}
}
\end{table*}
}

%% file: table/32b_result.tex
{
\addtolength{\extrarowheight}{\belowrulesep}

\aboverulesep=0pt
\belowrulesep=0pt
\begin{table*}[ht]
\centering
\caption{\textbf{Performance with Qwen2.5-32B-Instruct} on Deep Search (target training task) and Math Reasoning (measuring retention of base model capabilities).}
\label{tab:32b_result}
\renewcommand{\arraystretch}{1}

\small
\begin{tabular}{>{}lcccc<{}}
\toprule
\multirow{2}{*}{\textbf{Method}}             & \multicolumn{2}{c}{\textbf{Deep Search}} & \multicolumn{2}{c}{\textbf{Math Reasoning}} \\ \cmidrule(l){2-3} \cmidrule(l){4-5}
                            & \textbf{BrowseComp} & \textbf{BrowseComp-ZH} & \textbf{AIME 24}         & \textbf{AIME 25}        \\ \midrule
\multicolumn{5}{l}{\color[HTML]{8B8280}\textit{\textbf{Base: Closed-Source Model}}}          \\
\rowcolor[HTML]{EFEFEF}Prompting GPT-4o            & 0.7            & 6.9            & 9.3             & 13.1           \\
Prompting Claude-3.5-Sonnet & 0.8            & 5.9            & 16.0            & 7.4            \\ \midrule \midrule
\multicolumn{5}{l}{\color[HTML]{8B8280}\textit{\textbf{Base: Qwen2.5-32B-Instruct}}}         \\
\rowcolor[HTML]{EFEFEF}Prompting Qwen2.5           & 0.6            & 6.5            & \textbf{18.4}   & 11.5           \\
SFT                         & 3.8            & 19.4           & 11.3            & 10.0           \\
\rowcolor[HTML]{EFEFEF}RL                          & 6.7            & 28.8           & 13.3            & 16.7           \\
SFT$\to$RL        & \textbf{14.5}  & \textbf{35.2}  & 12.1            & 11.4           \\ \midrule
\rowcolor[HTML]{EBECFE}\textbf{\method}                        & 13.4 \textcolor{gray}{(-1.1)}           & 33.7 \textcolor{gray}{(-1.5)}          & 17.3 \textcolor{gray}{(+5.2)}           & \textbf{18.2} \textcolor{gray}{(+6.8)}  \\ \bottomrule
\end{tabular}

\end{table*}}

%% file: table/anneal-ablation.tex
{
\addtolength{\extrarowheight}{\belowrulesep}

\aboverulesep=0pt
\belowrulesep=0pt
\begin{table}[t]
\centering
\caption{\textbf{Ablation studies} on the annealing mechanism.}
\renewcommand{\arraystretch}{1.1}
\label{tab:anneal-abla}
\resizebox{\linewidth}{!}{
\begin{tabular}{>{}lccc<{}}
\toprule
\textbf{Method}           & \textbf{ALFWorld} & \textbf{WebShop (Score)} & \textbf{WebShop (Succ.)} \\ \midrule
\multicolumn{4}{l}{\color[HTML]{8B8280}\textit{\textbf{Constant Weight}}}         \\
\rowcolor[HTML]{EFEFEF} fixed 1.0                        & 63.5              & 72.9                     & 52.8                     \\
fixed 0.1                        & 69.8              & 75.1                     & 57.2                     \\ \midrule
\multicolumn{4}{l}{\color[HTML]{8B8280}\textit{\textbf{Annealing With Different Annealing Steps}}} \\
\rowcolor[HTML]{EFEFEF}25 steps               & 76.4              & 78.8                     & 61.2                     \\
50 steps               & 78.2              & 79.3                     & 60.6                     \\
\rowcolor[HTML]{EFEFEF} 100 steps (Ours) & \textbf{80.9}              & \textbf{82.5}                     & \textbf{69.3}                     \\ \bottomrule
\end{tabular}
}
\vspace{-1em}
\end{table}}

%% file: table/hybrid_related_compare.tex
{
\addtolength{\extrarowheight}{\belowrulesep}
\aboverulesep=0pt
\belowrulesep=0pt

\begin{table*}[t]
\centering
\caption{\textbf{Positioning \method among representative hybrid SFT-RL methods.}}
\label{tab:hybrid_related}
\renewcommand{\arraystretch}{1.2}
\resizebox{\textwidth}{!}{
\begin{tabular}{>{}lccc<{}}
\toprule
\textbf{Method} & \textbf{Primary Goal} & \textbf{How Supervision Enters} & \textbf{Forgetting Control} \\
\midrule
\rowcolor[HTML]{EFEFEF}UFT~\citep{liu2025uft} & Single-task post-training & Dense unified SFT+RL objective & No explicit continual-learning control \\
LUFFY~\citep{yan2025learning} & Reasoning beyond prior capability & Off-policy traces mixed with on-policy RL & Partial, but not continual-specific \\
\rowcolor[HTML]{EFEFEF}Iterative-SFT~\citep{chen2025retaining} & Improve with on-policy data & Alternating collection and updates & Milder drift, no explicit trigger \\
\textbf{\method} & Continual agent training & Sparse state-dependent trigger + annealing & Explicitly designed to reduce forgetting \\
\bottomrule
\end{tabular}
}
\end{table*}
}

%% file: table/continual_3task.tex
{
\addtolength{\extrarowheight}{\belowrulesep}
\aboverulesep=0pt
\belowrulesep=0pt

\begin{table*}[t]
\centering
\caption{\textbf{Three-stage continual learning experiment: ALFWorld $\rightarrow$ WebShop $\rightarrow$ ScienceWorld.} \emph{Prior-task Model} denotes the model after the first two stages, before adaptation to ScienceWorld.}
\label{tab:continual_3task}
\renewcommand{\arraystretch}{1}
\resizebox{0.8\textwidth}{!}{
\begin{tabular}{>{}lccc<{}}
\toprule
\textbf{Method} & \textit{\textbf{ALFWorld (Prior)}} & \textit{\textbf{WebShop (Prior)}} & \textbf{ScienceWorld (Continual)} \\
\midrule
\rowcolor[HTML]{EFEFEF}\textbf{\textit{Prior-task Model}} & \textit{74.2} & \textit{70.0} & 8.5 \\
SFT & \textit{69.0} & \textit{65.4} & 25.2 \\
\rowcolor[HTML]{EFEFEF}GRPO & \textit{71.7} & \textit{68.9} & 29.3 \\
SFT$\to$GRPO & \textit{68.6} & \textit{65.7} & 31.8 \\
\rowcolor[HTML]{EFEFEF}DAPO & \textit{71.3} & \textit{68.9} & 29.5 \\
SFT$\to$DAPO & \textit{68.5} & \textit{65.4} & 31.4 \\ \midrule
\rowcolor[HTML]{EBECFE}\textbf{\method$_\text{GRPO}$} & \textit{71.5 \textcolor{gray}{(+2.9)}} & \textit{68.3 \textcolor{gray}{(+2.6)}} & \textbf{31.9} \textcolor{gray}{(+0.1)} \\ \hdashline
\rowcolor[HTML]{EBECFE}\textbf{\method$_\text{DAPO}$} & \textit{69.7 \textcolor{gray}{(+1.2)}} & \textit{66.3 \textcolor{gray}{(+0.9)}} & \textbf{31.8} \textcolor{gray}{(+0.4)} \\
\bottomrule
\end{tabular}
}
\end{table*}
}

%% file: table/hybrid_joint_baselines.tex
{
\addtolength{\extrarowheight}{\belowrulesep}
\aboverulesep=0pt
\belowrulesep=0pt

\begin{table*}[ht]
\centering
\caption{\textbf{Comparisons with additional hybrid baselines and fixed-weight joint training.} We report success rates for both continual-learning directions.}
\label{tab:hybrid_joint}
\renewcommand{\arraystretch}{1}
\resizebox{\textwidth}{!}{
\begin{tabular}{>{}lcccc<{}}
\toprule
\multirow{2}{*}{\textbf{Method}} & \multicolumn{2}{c}{\textbf{ALFWorld $\rightarrow$ WebShop}} & \multicolumn{2}{c}{\textbf{WebShop $\rightarrow$ ALFWorld}} \\
\cmidrule(l){2-3} \cmidrule(l){4-5}
& \textit{\textbf{ALFWorld (Prior)}} & \textbf{WebShop (Continual)} & \textbf{ALFWorld (Continual)} & \textit{\textbf{WebShop (Prior)}} \\
\midrule
\rowcolor[HTML]{EFEFEF}\textbf{\textit{Prior-task Model}} & \textit{74.4} & 2.8 & 24.2 & \textit{64.3} \\
SFT & \textit{69.3} & 18.3 & 64.2 & \textit{29.4} \\
\rowcolor[HTML]{EFEFEF}GRPO & \textit{74.2} & 70.0 & 78.0 & \textit{65.0} \\
SFT$\to$GRPO & \textit{70.3} & 60.2 & 84.3 & \textit{58.7} \\
\rowcolor[HTML]{EFEFEF}Iterative-SFT~\cite{chen2025retaining} & \textit{74.8} & 69.8 & 82.5 & \textit{64.9} \\
LUFFY~\cite{yan2025learning} & \textit{74.0} & 73.6 & 85.0 & \textit{64.8} \\
\rowcolor[HTML]{EFEFEF}$\mathcal{L}_{\mathrm{RL}} + 0.1 \mathcal{L}_{\mathrm{SFT}}$ & \textit{70.5} & 71.3 & 82.3 & \textit{58.7} \\
$\mathcal{L}_{\mathrm{RL}} + 0.5 \mathcal{L}_{\mathrm{SFT}}$ & \textit{70.1} & 71.2 & 82.6 & \textit{58.2} \\ \midrule
\rowcolor[HTML]{EBECFE}\textbf{\method$_\text{GRPO}$} & \textbf{\textit{75.5}} & \textbf{74.5} & \textbf{85.2} & \textbf{\textit{68.3}} \\
\bottomrule
\end{tabular}
}
\end{table*}
}

%% file: table/qwen3_4b.tex
{
\addtolength{\extrarowheight}{\belowrulesep}
\aboverulesep=0pt
\belowrulesep=0pt

\begin{table*}[ht]
\centering
\caption{\textbf{Continual learning results with Qwen3-4B.} We report success rates for both continual-learning directions using the GRPO backbone.}
\label{tab:qwen3_4b}
\renewcommand{\arraystretch}{1}
\resizebox{\textwidth}{!}{
\begin{tabular}{>{}lcccc<{}}
\toprule
\multirow{2}{*}{\textbf{Method}} & \multicolumn{2}{c}{\textbf{ALFWorld $\rightarrow$ WebShop}} & \multicolumn{2}{c}{\textbf{WebShop $\rightarrow$ ALFWorld}} \\
\cmidrule(l){2-3} \cmidrule(l){4-5}
& \textit{\textbf{ALFWorld (Prior)}} & \textbf{WebShop (Continual)} & \textbf{ALFWorld (Continual)} & \textit{\textbf{WebShop (Prior)}} \\
\midrule
\rowcolor[HTML]{EFEFEF}\textbf{\textit{Prior-task Model}} & \textit{84.5} & 15.3 & 23.7 & \textit{54.9} \\
SFT & \textit{77.3} & 62.5 & 66.4 & \textit{25.8} \\
\rowcolor[HTML]{EFEFEF}GRPO & \textit{83.8} & 74.2 & 77.6 & \textit{49.7} \\
SFT$\to$GRPO & \textit{77.5} & 79.0 & 82.5 & \textit{44.3} \\ \midrule
\rowcolor[HTML]{EBECFE}\textbf{\method$_\text{GRPO}$} & \textbf{\textit{84.2}} \textcolor{gray}{(+6.7)} & 78.8 \textcolor{gray}{(-0.2)} & \textbf{83.0} \textcolor{gray}{(+0.5)} & \textbf{\textit{49.9}} \textcolor{gray}{(+5.6)} \\
\bottomrule
\end{tabular}
}
\end{table*}
}

%% file: table/hyper_7b.tex
{
\addtolength{\extrarowheight}{\belowrulesep}
\aboverulesep=0pt
\belowrulesep=0pt
\begin{table}[h]
\small
\caption{WebShop and ALFWorld's Hyperparameters.}
\label{tab:hyper_webshop}
\centering
\renewcommand{\arraystretch}{1}
\begin{tabular}{>{}lc<{}}
\toprule
\textbf{Parameter} & \textbf{Value} \\
\midrule
\multicolumn{2}{l}{\color[HTML]{8B8280}\textbf{\textit{Optimization}}} \\
\rowcolor[HTML]{EFEFEF}Actor Learning Rate & 1e-6 \\
Batch Size & 64 \\
\rowcolor[HTML]{EFEFEF}Training Steps & 150 \\
\midrule
\multicolumn{2}{l}{\color[HTML]{8B8280}\textbf{\textit{Regularization}}} \\
\rowcolor[HTML]{EFEFEF}KL Loss Coefficient & 0.01 \\
Entropy Coefficient & 0.001 \\
\midrule
\multicolumn{2}{l}{\color[HTML]{8B8280}\textbf{\textit{Policy Clipping}}} \\
\rowcolor[HTML]{EFEFEF}Clip Range (GRPO) & 0.2 \\
Clip Low / High (DAPO) & 0.2 / 0.28 \\
\midrule
\multicolumn{2}{l}{\color[HTML]{8B8280}\textbf{\textit{Rollout Configuration}}} \\
\rowcolor[HTML]{EFEFEF}Group Size & 8 \\
Temperature & 1.0 \\
\rowcolor[HTML]{EFEFEF}Max Turns (ALFWorld) & 50 \\
Max Turns (WebShop) & 15 \\
\midrule
\multicolumn{2}{l}{\color[HTML]{8B8280}\textbf{\textit{Trigger-based Supervision}}} \\
\rowcolor[HTML]{EFEFEF}Trigger Threshold & 0 \\
Annealing Step & 100 \\
\bottomrule
\end{tabular}
\end{table}}

%% file: table/hyper_32b.tex
{\addtolength{\extrarowheight}{\belowrulesep}
\aboverulesep=0pt
\belowrulesep=0pt
\begin{table}[h]
\small
\caption{Deep Search's Hyperparameters.}
\label{tab:hyper_deepsearch}
\centering
\renewcommand{\arraystretch}{1}
\begin{tabular}{>{}lc<{}}
\toprule
\textbf{Parameter} & \textbf{Value} \\
\midrule
\multicolumn{2}{l}{\color[HTML]{8B8280}\textbf{\textit{Optimization}}} \\
\rowcolor[HTML]{EFEFEF} Actor Learning Rate & 1e-6 \\
Batch Size & 256 \\
\rowcolor[HTML]{EFEFEF}Training Steps & 100 \\
\midrule
\multicolumn{2}{l}{\color[HTML]{8B8280}\textbf{\textit{Regularization}}} \\
\rowcolor[HTML]{EFEFEF} KL Loss Coefficient & 0.001 \\
Entropy Coefficient & 0.0 \\
\midrule
\multicolumn{2}{l}{\color[HTML]{8B8280}\textbf{\textit{Policy Clipping}}} \\
\rowcolor[HTML]{EFEFEF}Clip Range & 0.2 \\
\midrule
\multicolumn{2}{l}{\color[HTML]{8B8280}\textbf{\textit{Rollout Configuration}}} \\
\rowcolor[HTML]{EFEFEF}Group Size & 8 \\
Temperature & 1.0 \\
\rowcolor[HTML]{EFEFEF}Max Turns & 80 \\
\midrule
\multicolumn{2}{l}{\color[HTML]{8B8280}\textbf{\textit{Trigger-based Supervision}}} \\
\rowcolor[HTML]{EFEFEF}Trigger Threshold & 0 \\
Annealing Step & 50 \\
\bottomrule
\end{tabular}
\vspace{-1em}
\end{table}}